\title{MoSE: Unveiling Structural Patterns in Graphs via Mixture of Subgraph Experts}
\author {
    Junda Ye\thanks{This work was completed while Junda Ye was a visiting PhD student at Nanyang Technological University.}\textsuperscript{\rm 1,\rm 3},
    Zhongbao Zhang\footnotemark[2]\textsuperscript{\rm 1},
    Li Sun\textsuperscript{\rm 2},
    Siqiang Luo\thanks{Co-Corresponding author.}\textsuperscript{\rm 3}
}
\newcommand{\ie}{\textit{i.e.}}
\crefname{equation}{Eq.}{Eqs.}
\crefname{table}{Tab.}{Tabs}
\crefname{figure}{Fig.}{Figs.}
\crefname{section}{Sec.}{Secs.}
\crefname{algorithm}{Algorithm}{Algorithms}
\crefname{proposition}{Prop.}{Props.}
\crefname{corollary}{Cor.}{Cors.}
\newtheorem{lemma}{Lemma}
\newtheorem{definition}{Definition}
\newtheorem{corollary}{Corollary}
\newtheorem{proposition}{Proposition}
\DeclareMathOperator*{\concat}{%
    \mathchoice%
        {\Big\Vert}%
        {\big\Vert}%
        {\Vert}%
        {\Vert}%
}
\newcommand{\multiset}[1]{\left\lbrace \! \left\lbrace {#1} \right\rbrace \! \right\rbrace}
\newcommand{\ourmethod}{MoSE\xspace}
\begin{document}

\maketitle

\begin{abstract}
While graph neural networks (GNNs) have achieved great success in learning from graph-structured data, 
their reliance on local, pairwise message passing restricts their ability to capture complex, high-order subgraph patterns. leading to insufficient structural expressiveness.
Recent efforts have attempted to enhance structural expressiveness by integrating random walk kernels into GNNs.
However, these methods are inherently designed for graph-level tasks, which limits their applicability to other downstream tasks such as node classification. Moreover, their fixed kernel configurations hinder the model’s flexibility in capturing diverse subgraph structures. To address these limitations, this paper proposes a novel \textbf{M}ixture \textbf{o}f \textbf{S}ubgraph \textbf{E}xperts (\textbf{\ourmethod}) framework for flexible and expressive subgraph-based representation learning across diverse graph tasks. Specifically, \ourmethod extracts informative subgraphs via anonymous walks and dynamically routes them to specialized experts based on structural semantics, enabling the model to capture diverse subgraph patterns with improved flexibility and interpretability. We further provide a theoretical analysis of \ourmethod's expressivity within the Subgraph Weisfeiler-Lehman (SWL) Test, proving that it is more powerful than SWL. Extensive experiments, together with visualizations of learned subgraph experts, demonstrate that \ourmethod not only outperforms competitive baselines, but also provides interpretable insights into structural patterns learned by the model.
\end{abstract}

\section{Introduction} \label{introduction}
Graphs, serving as data structures capable of depicting complex relationships between entities, are ubiquitous in real-world scenarios, such as social networks \citep{social1, SCARA, liu2025sigma}, biological molecules \citep{Molecular1, Molecular2}, and recommendation systems \citep{recommendation1, recommendation2}. As a prominent approach for learning representations from graphs, graph neural networks (GNNs) have been widely studied \citep{ChebNet, GCN, GraphSAGE, GAT, GIN} based on the philosophy of message-passing mechanism motivated by the Weisfeiler-Lehman (WL) graph isomorphism test \citep{WLtest_original}. These message passing neural networks (MPNNs) constitute the most prevailing type of GNNs, however, their expressive power is known to be upper bounded by the first-order Weisfeiler-Lehman (1-WL) isomorphism test \citep{GIN}. Moreover, recent research has demonstrated that such 1-WL equivalent GNNs lack sufficient expressive power to differentiate fundamental structural patterns such as triangles and cycles \citep{lackpower1, lackpower2}. 

As another line of work, graph kernel-based GNNs have attracted increasing attention in recent years due to their ability to bridge the provable theoretical guarantees of graph kernels with the expressive power of MPNNs \citep{SCKN, KernelNN, DDGK, GNTK, GCKN, GKNN}. Among them, methods based on random walk kernel (RWK) have garnered particular interest due to their strong interpretability \citep{RWNN, GSKN, RWK, GKNN}. 
These RWK-based GNNs typically compute similarities between the input graph and a set of learnable subgraphs, commonly referred to as hidden graphs, to derive graph-level representations. Each hidden graph serves as a structural probe and offers an interpretable way to capture recurring patterns across different datasets.



\begin{figure}[htbp]
    \centering
    \includegraphics[width=1\linewidth]{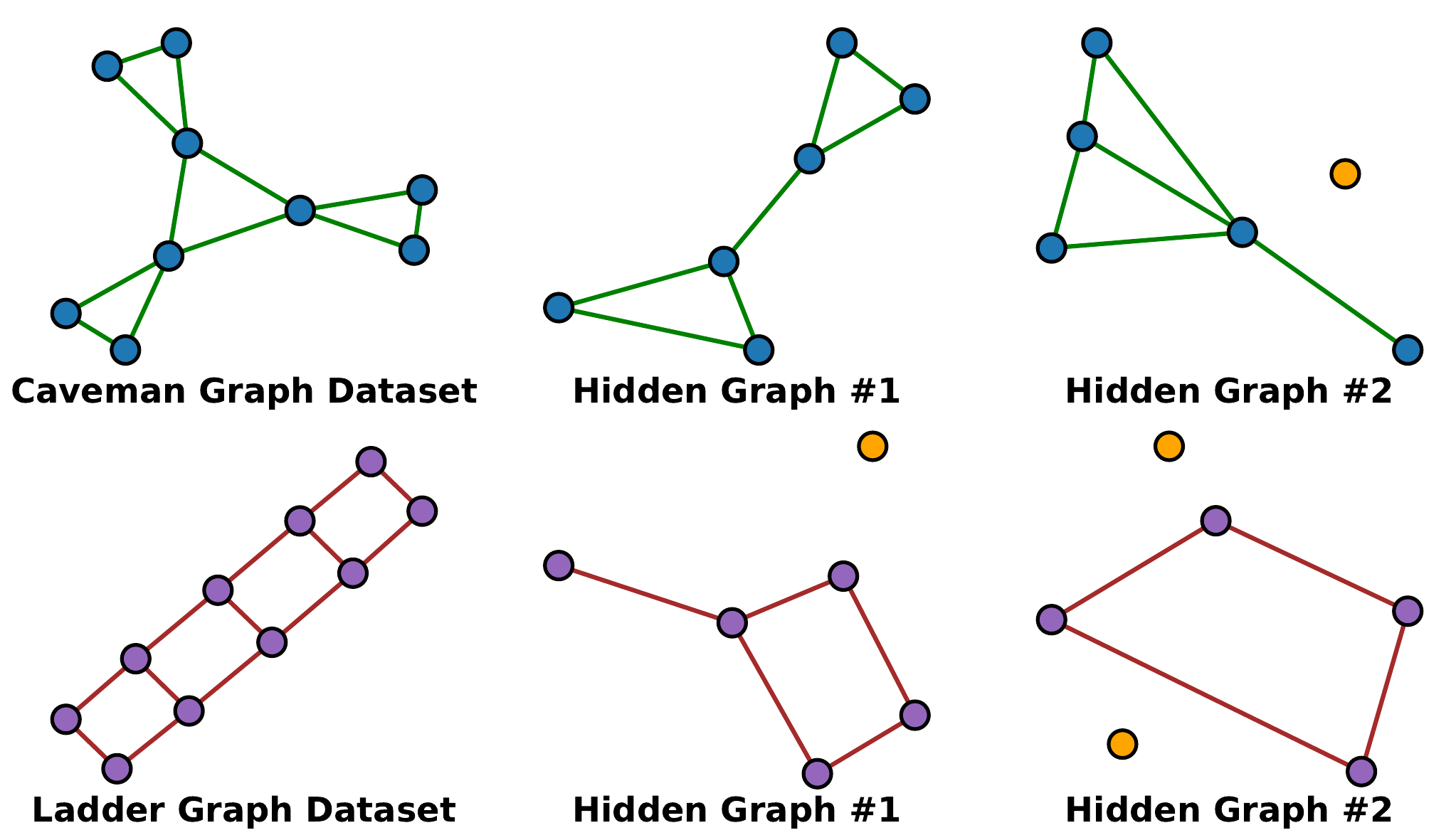}
    \caption{Hidden graphs learned by RWNN on Caveman and Ladder synthetic graph datasets. The hidden graph sizes are all fixed to 6. Redundant nodes, colored in orange, are \textbf{isolated} and contribute little to subgraph patterns.}
    \label{fig:toy_example}
\end{figure}

Although RWK-based methods have achieved promising results, they still suffer from the following limitations. \textit{\textbf{First}}, due to the computational complexity in calculating RWK, existing methods have primarily been constrained to graph-level tasks, which typically involve graphs with relatively fewer nodes. Consequently, these methods have seldom been applied to node-level tasks, thereby severely limiting their applicability to diverse real-world scenarios. While a few recent methods have attempted to extend RWK-based models to node-level tasks \citep{GSKN, RWK}, they compute node representations by invoking RWK calculations or walk simulations over the entire graph, leading to high computational cost. \textit{\textbf{Second}}, current models typically adopt pre-defined hidden graphs and fixed random walk lengths, and they treat each hidden graph equally, resulting in limited flexibility and adaptability. 
This rigid strategy limits the model’s capacity to adaptively represent optimal substructures, leading to redundancy and unnecessary computational overhead.
We illustrate this with a case study in \cref{fig:toy_example}, where fixed hidden graph configurations lead to the inclusion of redundant nodes, ultimately incurring extra computational cost. These insights and limitations raise a natural question: \textbf{\textit{Can we design a more flexible and efficient RWK-based framework that adaptively supports feature and structure modeling for each node to better suit diverse graph tasks and scales while preserving interpretability?}} 

Toward this end, we propose \ourmethod, namely Mixture of Subgraph Experts, a new model grounded in the Mixture of Experts (MoE) paradigm, 
enabling each node to adaptively select subgraph experts from its local context.
Specifically, we introduce a novel subgraph extraction strategy together with a subgraph-aware gating mechanism, which jointly helps subgraph experts to capture informative structural patterns. Compared to other RWK-based models, \ourmethod offers more refined interpretability, and an average 10.84\% performance improvement at around 30\% reduction in runtime. The interpretability is further supported by visualizations of the learned subgraph experts, which reveal compact and diverse structural patterns aligned with task semantics.

The major contributions of our work are summarized as follows:
\begin{itemize}
    \item To the best of our knowledge, we are the first to introduce MoE into RWK-based graph neural networks, offering a flexible framework for modeling diverse subgraph patterns while retaining the semantic interpretability of kernel-based modeling.
    \item We propose a novel model \ourmethod, which integrates a newly designed subgraph extraction strategy based on anonymous walks with a mixture of learnable subgraph experts guided by a subgraph-aware gating mechanism. Grounded in the Subgraph Weisfeiler-Lehman (SWL) test, the model captures discriminative subgraph patterns.
    \item Extensive experiments across 19 datasets on both graph-level and node-level tasks demonstrate the superiority, efficiency, and interpretability of our model over other competitive baselines.
\end{itemize}

\section{Preliminaries}

\subsection{Notation and Problem Formulation}
Let $G = (\mathcal{V}, \mathcal{E})$ denote an undirected, unweighted graph, where $\mathcal{V} = \{v_1, v_2, \dots, v_n\}$ is the set of $n$ nodes and $\mathcal{E} \subseteq \mathcal{V} \times \mathcal{V}$ represents the edge set, respectively. The topological structure of $\mathcal{G}$ can also be represented by a binary adjacency matrix $\bm{A} \in \mathbb{R}^{n \times n}$, where $\bm{A}_{ij}=1$ iff the nodes $v_i$ and $v_j$ are connected, otherwise $\bm{A}_{ij}=0$. Given a node $u\in\mathcal{V}$, denote its neighbors as $\mathcal{N}(u) \coloneq \{v\in \mathcal{V} : \{u, v\} \in \mathcal{E}\}$. We denote the node attributes/features as a matrix $\bm{X} \in \mathbb{R}^{n \times f}$ where $f$ is the feature dimension. $\bm{Y}$ is denoted as the node/graph label set, and each node/graph is assigned to a label $y \in \bm{Y} = \{1, 2, \ldots, C\}$, where $C$ is the number of total classes for node/graph classification task.

In this paper, we focus on two fundamental graph learning tasks: node classification and graph classification. The goal is to learn a representation function that maps each node or an entire graph to a low-dimensional embedding space, such that the learned representations can support accurate downstream classification.

\subsection{Random Walk Kernel}
Random walk kernel (RWK) was originally proposed \citep{original_rwk1, original_rwk2} to measure the similarity between two graphs by counting the number of walks they share. To efficiently compute such walk-based similarity, RWK introduces the concept of direct product graph as follows:

\begin{definition}[\textbf{Direct Product Graph}]
    Given two graphs $G=(\mathcal{V}, \mathcal{E})$ and $G'=(\mathcal{V}', \mathcal{E}')$, their direct product graph $G_\times=(\mathcal{V}_\times, \mathcal{E}_\times)$ is a graph with vertices $\mathcal{V}_\times=\{(v, v') : v\in \mathcal{V} \wedge v' \in \mathcal{V}'\}$ and edges $\mathcal{E}_\times = \{\{(u, u'), (v, v')\} : \{u, v\} \in \mathcal{E} \wedge \{u', v'\} \in \mathcal{E}'\}$.
\end{definition}

It is obvious that $G_\times$ is a graph over paris of nodes from $G$ and $G'$, and two nodes in $G_\times$ are neighbors if and only if the corresponding nodes in $G$ and $G'$ are both neighbors. Therefore, performing a random walk on $G_\times$ is equivalent to performing simultaneous random walks on $G$ and $G'$. Given step $P \in \mathbb{N}$, the $P$-step random walk kernel between two graphs $G$ and $G'$ is defined as:
\begin{equation} \label{p-step random walk kernel}
    \mathcal{K}(G, G')= \sum_{p=0}^{P} \mathcal{K}^{(p)}(G, G') =\sum_{p=0}^{P}  \sum_{i=1}^{\vert \mathcal{V}_\times \vert} \sum_{j=1}^{\vert \mathcal{V}_\times \vert} \lambda_p  [\bm{A}_\times^p]_{ij},
\end{equation}
where $\bm{A}_\times$ is the adjacency matrix of $G_\times$ and $\lambda=(\lambda_0, \lambda_1, \ldots, \lambda_P)$ is a sequence of weights. Note that the $(i, j)$-th entry of $\bm{A}_\times^p$ (\ie,$\bm{A}_\times$ to the power $p$) is the number of common walks of step length $p$ between the $i$-th and $j$-th node in $G_\times$.

\begin{figure*}[ht]
    \centering
    \includegraphics[width=0.88\linewidth]{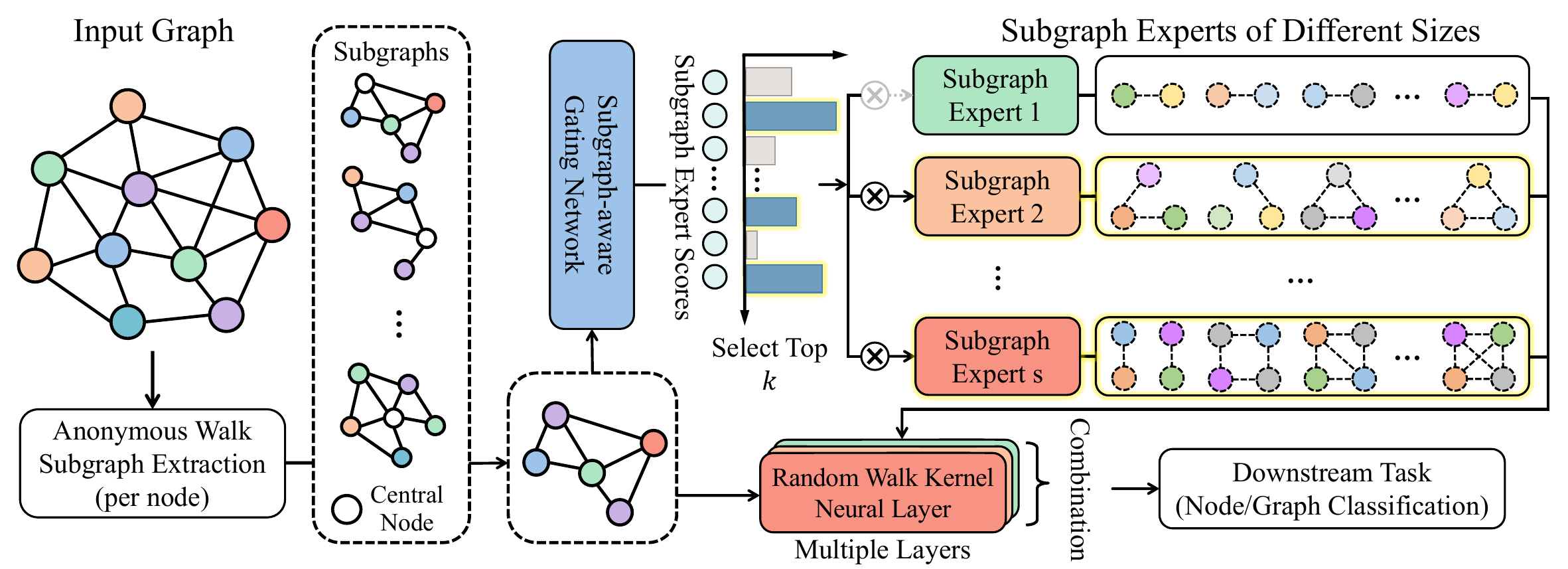}
    \caption{The overall architecture of \ourmethod. The subgraph of each node is extracted via anonymous walks and is routed to suitable subgraph experts (highlighted in yellow) through a subgraph-aware gating network.}
    \label{architecture}
\end{figure*}

To enable integration with neural networks, RWK is extended from the discrete formulation to a continuous version. Let $\bm{S}=\bm{X}'\bm{X}^\top \in \mathbb{R}^{n'\times n}$ be the dot product similarity between the node attributes from two graphs, and we flatten $\bm{S}$ into a vector $\bm{s} :=\mathrm{vec}(\bm{S}) \in \mathbb{R}^{nn'}$ for notational convenience. The differentiable version of \cref{p-step random walk kernel} can then be reformulated as:

\begin{equation} \label{differentiable random walk kernel}
    \mathcal{K}^{(p)}(G, G')= \sum_{i=1}^{\vert \mathcal{V}_\times \vert} \sum_{j=1}^{\vert \mathcal{V}_\times \vert} \bm{s}_i \bm{s}_j [\bm{A}_\times^p]_{ij} = \bm{s}^\top \bm{A}_\times^p \bm{s},
\end{equation}
which facilitates end-to-end learning.

\subsection{Random Walk Kernel Neural Network}
Building on the differentiable formulation of the RWK, recent works have proposed random walk kernel neural networks \citep{RWNN, RWK}, which integrate RWK into end-to-end trainable architectures. These models adopt a set of trainable hidden graphs ${H_i}$ in each layer as structural probes and encode the input graph $G$ by computing its similarity with each hidden graph via the \cref{differentiable random walk kernel}. Formally, the graph-level representation of $G$ is obtained as a concatenation of its RWK-based similarities with all hidden graphs:
\begin{equation} \label{graph level representation}
\begin{aligned}
   \bm{h}(G) = \concat_{i=1}^k \mathcal{K}^{(p)}(G, H_i) 
    =\left[ \mathcal{K}^{(p)}(G, H_1), \ldots, \mathcal{K}^{(p)}(G, H_k)\right],
\end{aligned}
\end{equation}
where $\concat$ is the concatenation operator. The graph representation $\bm{h}(G)$ is then fed into a classifier for prediction.

\section{Proposed Method}
This section details our proposed method named \ourmethod which consists of three key components: a new subgraph extraction strategy, topology-aware expert routing, and subgraph experts. The overall framework is shown in \cref{architecture}.

\subsection{Anonymous Walk-based Subgraph Extraction}
For subgraph-based graph learning methods, the subgraph extraction plays a crucial role in determining the expressiveness of the model \citep{SAGNN}, as the extracted subgraphs define the local context within the node. The most widely adopted strategy is the $h$-hop induced ego graph of node $v$, denoted as $\mathcal{B}(v, h)$.
Many practical implementations restrict themselves to $\mathcal{B}(v, 1)$ since the number of nodes grows exponentially with increasing $h$, leading to substantial computational overhead. Unfortunately, $\mathcal{B}(v, 1)$ is often insufficient for capturing long-range dependencies. To address these limitations, we consider using anonymous walks \citep{AWE}, a compact yet expressive descriptor, as the basis for subgraph extraction.
Unlike random walks, anonymous walks ignore node identities and focus on structure, better capturing frequent subgraph patterns.


\begin{definition}[\textbf{Anonymous Walk}]
Given a random walk $w=(v_0, v_1, \ldots, v_L)$ on a graph, the corresponding anonymous walk is defined as a sequence of integers $a=(\gamma_0, \gamma_1, \ldots, \gamma_L)$, where $\gamma_i=\min\mathrm{pos}(w, v_i)$, and $\mathrm{pos}(w, v_i)$ returns the position at which the node $v_i$ first appears in the walk $w$. We denote the mapping of a random walk $w$ to the anonymous walk $a$ by $w\mapsto a$.
\end{definition}


\begin{proposition} \label{AW reconstruct Ego Graph}
Let $G=(\mathcal{V}, \mathcal{E})$ be a graph and $v \in \mathcal{V}$. A sufficiently rich distribution $\mathcal{D}_l$ over length-$l$ anonymous walks starting at $v$ with $l = \mathcal{O}(|\mathcal{E}'|)$, is sufficient to reconstruct the ego graph $\mathcal{B}(v, h) = (\mathcal{V}', \mathcal{E}')$ centered at $v$ with radius $h$.
\end{proposition}

\cref{AW reconstruct Ego Graph} follows from Theorem 1 in \citep{AWequalKEGO}. It shows that the distribution of anonymous walks rooted at a node contains sufficient information to reconstruct the ego graph around that node within a fixed radius. This motivates us to extract subgraphs by selecting representative anonymous walk patterns instead of relying on $h$-hop ego graph.

We begin by sampling random walks of fixed length starting from the node $v$. The resulting walk set is denoted as $\mathcal{W}_v$. Each walk is then mapped to its anonymous walk, forming the set as: 
\begin{equation} \label{Convert to AW}
    \mathcal{A}_v=\{w\mapsto a: w\in \mathcal{W}_v\}.
\end{equation}

Next, we aggregate all anonymous walks from all nodes and identify the most frequently occuring patterns in the graph. Let $k_{walk}$ be the number of walk patterns to retain. The global top-$k_{walk}$ anonymous walk patterns are defined as follows:
\begin{equation} \label{Select top k pattern from AW}
    \mathcal{P} = \mathrm{TopK}\left(\bigcup_{v\in\mathcal{V}}\mathcal{A}_v, k_{walk} \right).
\end{equation}

For each node $v$, we collect all nodes that appear in random walks whose corresponding anonymous walk belongs to the top patterns $\mathcal{P}$. These nodes form the anonymous walk-based neighborhood $\mathcal{N}_{v}^{aw}$. Finally, we induce the subgraph $G_v$ from the original graph using node $v$ and $\mathcal{N}_{v}^{aw}$:

\begin{equation} \label{induced subgraph based on AW}
    G_v = G[\{v\} \cup \mathcal{N}_{v}^{aw}].
\end{equation}

\subsection{Subgraph-aware Gating Network}
Once the subgraph $G_v=(\mathcal{V}_v, \mathcal{E}_v)$ is extracted for each node $v$, we proceed to assign it to the most appropriate experts. However, existing noisy gating mechanisms \citep{noisy_gating, MoG} typically ignore the underlying graph topology. To address this limitation, we propose a subgraph-aware gating network that conditions expert selection on the topology-aware subgraph $G_v$ as follows:
\begin{align}
    &\bm{\eta} (v) = \sigma \left(\bm{x}_v + \sum\nolimits_{u \in \mathcal{V}_v}\alpha_{uv}\bm{x}_u \right), \label{MoE Gating Network 1}\\
    \bm{\psi}(v) &=\bm{\eta}(v)\bm{W}_g+\epsilon\cdot \mathrm{Softplus}(\bm{\eta}(v)\bm{W}_n), \label{MoE Gating Network 2}\\
    \bm{\zeta}(v) &= \mathrm{Softmax} \left(\mathrm{TopK}\left(\bm{\psi}(v), k_{ept}\right)\right), \label{MoE Gating Network 3}
\end{align}
where $\alpha_{uv}=\exp(\bm{x}_u^\top\bm{x}_v)/\sum_{u\in\mathcal{V}_v}\exp(\bm{x}_u^\top\bm{x}_v)$ are non-parametric attention weights, and $\sigma(\cdot)$ is a nonlinear activation function. The matrices $\bm{W}_g, \bm{W}_n \in \mathbb{R}^{f\times K}$ are trainable parameters that respectively produce clean and noisy scores, with $\epsilon \in \mathcal{N}(0, 1)$ denoting the standard Gaussian noise. The resulting $\bm{\psi}(v) \in \mathbb{R}^K$ are pre-selection logits over all $K$ experts, and $\bm{\zeta}(v) \in \mathbb{R}^{k_{ept}}=[E_1^v, E_2^v, \ldots, E_{k_{ept}}^v]$ denotes the final sparse routing weights over the top-$k_{ept}$ selected experts.

\subsection{Subgraph Experts}
Having selected the proper experts for each node $v$, we now introduce the architecture of each expert and describe how they process the subgraph $G_v$.

We define a group of $K$ experts as $E=\{E_s: s=1, 2, \ldots, K\}$ where each expert $E_s$ consists of $N$ hidden graphs of size $s$, denoted as $E_s=\{H_1^s, H_2^s, \ldots, H_N^s\}$. Each hidden graph $H_i^s$ is equipped with a learnable adjacency matrix $\textsc{ReLU}(\bm{W}_i) \in \mathbb{R}^{s\times s}$ and a learnable feature matrix $\bm{Z}_i \in \mathbb{R}^{s\times f}$.

Given a node $v$ and its extracted subgraph $G_v$ with the corresponding adjacency matrix and node features $\bm{A}_{G_v}$ and $\bm{X}_{G_v}$, each expert computes the node representation through $N$ hidden graphs as:
\begin{equation}\label{simple representation}
    \bm{h}_s(v) = \mathrm{MLP}\left(\concat_{i=1}^N \mathcal{K}^{(p)}\left(G_v, H_i^s\right)\right),
\end{equation}
where $\mathcal{K}^{(p)}\left(G_v, H_i^s\right)$ represents the $p$-step random walk kernel between $G_v$ and $H_i^s$, computed as follows:
\begin{equation} \label{derivation of K}
    \begin{aligned}
        \mathcal{K}^{(p)}(G_v, &H_i) = \bm{s}^\top (\bm{A}_{G_v}^p \otimes \textsc{ReLU}(\bm{W}_i)^p)\bm{s} \\
        &=\bm{s}^\top \mathrm{vec}(\textsc{ReLU}(\bm{W}_i)^p \mathrm{vec}^{-1}(\bm{s})\bm{A}_{G_v}^p) \\
        &= \bm{1}^\top [\bm{Z}_i \bm{X}_{G_v}^\top \odot \textsc{ReLU}(\bm{W}_i)^p \bm{Z}_i \bm{X}_{G_v}^\top \bm{A}_{G_v}^p] \bm{1}.
    \end{aligned}
\end{equation}

After obtaining the expert-specific node representation $\bm{h}_s(v)$, we combine them into a final representation for node $v$ based on the routing weights $\bm{\zeta}(v)$ produced by \cref{MoE Gating Network 3}. Specifically, the final representation is computed as:
\begin{equation} \label{final representation}
    \bm{h}(v) = \textsc{Comb}\left(\{\bm{h}_s(v) : s\in\mathcal{M}\}, \bm{\zeta}\left(v\right)\right),
\end{equation}
where $\mathcal{M}$ denotes the set of selected expert indices and $\textsc{Comb}(\cdot)$ can either be a weighted sum $\sum_{m\in\mathcal{M}} E_m^v\bm{h}_m(v)$ or concatenation form $\mathrm{MLP}(\concat_{m\in\mathcal{M}} E_m^v \bm{h}_m\left(v\right))$. 

For graph-level tasks, we apply a readout function over all node embeddings:
\begin{equation} \label{Graph representation}
    \bm{h}(G)=\textsc{Readout}(\{\bm{h}(v) | v \in \mathcal{V}\}),
\end{equation}
where $\textsc{Readout}(\cdot)$ is a permutation-invariant pooling.

\subsection{Optimization Objective}
Following classic MoE works \citep{noisy_gating, GMOE, MoG}, to prevent the gating network from consistently selecting a few experts, causing expert load imbalance during the training process, we introduce an expert importance loss as:
\begin{align}  \label{Important Loss}
    \mathrm{Importance}(\mathcal{V}) =& \sum_{v\in\mathcal{V}}\sum_{m\in\mathcal{M}}E_m^v,\\
    \mathcal{L}_{importance}(\mathcal{V}) = \mathrm{CV}&(\mathrm{Importance}(\mathcal{V}))^2,
\end{align}   
where $\mathrm{Importance}(\mathcal{V})$ represents the sum of each node's expert scores across all nodes, $\mathrm{CV}(\cdot)$ denotes the coefficient of variation, which encourages a more uniform positive distribution. Therefore, $\mathcal{L}_{importance}$ discourages expert collapse by minimizing disparity. The final loss function combines both task-specific loss and expert balance loss as follows:
\begin{equation} \label{final loss}
    \mathcal{L} = \mathcal{L}_{task} + \beta\cdot \mathcal{L}_{importance},
\end{equation}
where $\beta$ is a hyperparameter controlling the trade-off between two losses.

\section{Theoretical Analysis}
In this section, we analyze the expressive power of \ourmethod from the aspect of Subgraph Weisfeiler-Lehman Test \citep{SWL, GNN_AK} and the computational complexity of the proposed model.

\subsection{Expressive Power of \ourmethod}
To motivate our analysis, we begin by reviewing the classical 1-WL test and its subgraph-based generalization.

\begin{definition}[\textbf{1-WL Node Refinement}]
Let $G=(\mathcal{V}, \mathcal{E})$ be a graph, and let $c_v^{(0)}$ denote the initial color (label) of node $v$. At each iteration $t\geq 0$, the 1-WL test updates the node color by aggregating the multiset of neighbor colors:
\[
c_v^{(t+1)} = \operatorname{hash}\left(c_v^{(t)}, \multiset{c_u^{(t)} \mid u \in \mathcal{N}(v)}\right).
\]
After $T$ iterations, the multiset $\multiset{c_v^{(T)} \mid v \in \mathcal{V}}$ serves as the graph’s fingerprint. Two graphs are distinguished if their fingerprints differ.
\end{definition}

Modern MPNNs are known to be bounded by 1-WL \citep{1WL_upbound}, which limits their ability to distinguish certain non-isomorphic graphs. This is due to the fact that 1-WL only aggregates neighborhood color multisets without considering the structural relations among neighbors.
To enhance discrimination, SWL replaces the 1-hop neighborhood multiset with a richer subgraph view as follows:

\begin{definition}[\textbf{SWL Node Refinement}]
    Subgraph Weifeiler-Lehman generalizes the 1-WL graph isomorphism test algorithm by replacing the color refinement at iteration $t$ with $c^{(t+1)}_v =
\operatorname{hash}\bigl(G^{(t)}_v\bigr)$, where $G^{(t)}_v=(\mathcal V_v,\mathcal E_v,c_v^{(t)})$ is the induced subgraph rooted at node $v$.
\end{definition} \label{def:hswl}

In \ourmethod, each node $v$ is first associated with a topology-aware subgraph $G_v$ extracted via anonymous walk-based policy. Then, the model computes a vector $\Phi(G_v) := \left[\mathcal{K}^{(p)}(G_v, H_i)\right]_{i=1}^N$ of random walk kernel similarities between $G_v$ and a set of learnable hidden graphs $\{H_i\}_{i=1}^N$, followed by an injective MLP mapping. This operation can be interpreted as a learned hashing mechanism applied to $G_v$:
\[
c_v^{(t+1)} = \rho\left(\Phi(G_v)\right) = \rho\left( \left[\mathcal{K}^{(p)}(G_v, H_i)\right]_{i=1}^N \right),
\]
where $\rho$ is the MLP. To make this statement precise, we establish the following proposition and further give our corollary.

\begin{proposition} \label{sufficient hidden graphs}
Let $\mathcal{H}=\{H_i\}_{i=1}^N$ be a sufficiently large set of hidden graphs, and let $\mathcal{K}^{(p)}$ be a positive-definite random walk kernel that distinguishes all non-isomorphic subgraphs of bounded size. If $\rho$ is injective, then the representation $c_v^{(t+1)}$ computed by \ourmethod is injective over the set of $G_v$, i.e., \ourmethod simulates one round of SWL node refinement.
\end{proposition}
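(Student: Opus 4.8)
The plan is to reduce the claimed injectivity of the learned color $c_v^{(t+1)} = \rho(\Phi(G_v))$ to a separation property of the feature map $\Phi$, and then to exploit the positive-definiteness of $\mathcal{K}^{(p)}$ to show that graph-valued hidden graphs already suffice. Since $\rho$ is assumed injective, it is enough to prove that $\Phi(G_v) = [\mathcal{K}^{(p)}(G_v, H_i)]_{i=1}^N$ is injective over isomorphism classes of the extracted subgraphs: if $G_v \not\cong G_{v'}$ as \emph{colored} subgraphs (the current colors $c^{(t)}$ enter the kernel through the node features $\bm{X}_{G_v}$, so $\mathcal{K}^{(p)}$ compares attributed graphs exactly as SWL hashes $G_v^{(t)}=(\mathcal{V}_v,\mathcal{E}_v,c_v^{(t)})$), then $\Phi(G_v) \neq \Phi(G_{v'})$. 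Composing an injective $\Phi$ with the injective $\rho$ then yields an injective assignment of $c_v^{(t+1)}$, which is precisely the SWL hash $c_v^{(t+1)}=\operatorname{hash}(G_v^{(t)})$.

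First I would invoke positive-definiteness to obtain a feature map $\phi$ into a reproducing kernel Hilbert space with $\mathcal{K}^{(p)}(G,G') = \langle \phi(G), \phi(G') \rangle$. The hypothesis that $\mathcal{K}^{(p)}$ distinguishes all non-isomorphic subgraphs of bounded size is then exactly the statement that $\phi$ is injective on isomorphism classes, i.e. $\phi(G)\neq\phi(G')$ whenever $G\not\cong G'$. Because the $G_v$ are induced by nodes collected from anonymous walks of a fixed (bounded) length, they have bounded size, so only finitely many isomorphism classes arise and the corresponding feature vectors $\{\phi([G_v])\}$ form a finite set of pairwise-distinct points.

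The key step is to show that a finite, \emph{realizable} set of hidden graphs separates these points, rather than requiring arbitrary RKHS directions. A short computation using positive-definiteness does the work: for $G\not\cong G'$, suppose $\mathcal{K}^{(p)}(G,H)=\mathcal{K}^{(p)}(G',H)$ for every graph $H$. Evaluating this identity at $H=G$ and at $H=G'$ and using symmetry forces $\mathcal{K}^{(p)}(G,G)=\mathcal{K}^{(p)}(G,G')=\mathcal{K}^{(p)}(G',G')$, whence $\|\phi(G)-\phi(G')\|^2 = \mathcal{K}^{(p)}(G,G) - 2\mathcal{K}^{(p)}(G,G') + \mathcal{K}^{(p)}(G',G') = 0$, contradicting $\phi(G)\neq\phi(G')$. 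Hence at least one of $G,G'$ itself acts as a separating probe. Taking $\mathcal{H}$ large enough to contain representatives of all isomorphism classes of bounded-size subgraphs, which is what ``sufficiently large'' supplies, guarantees that any two non-isomorphic $G_v, G_{v'}$ differ in at least one coordinate of $\Phi$, so $\Phi$ is injective on isomorphism classes.

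Assembling the pieces, injective $\Phi$ followed by injective $\rho$ gives an injective map from the colored subgraph $G_v^{(t)}$ to $c_v^{(t+1)}$, matching the SWL hash and proving that \ourmethod simulates one round of SWL node refinement. The main obstacle, and where the ``sufficiently large'' hypothesis is genuinely consumed, is this third step: bridging the pairwise distinguishing property of the full kernel to separation by finitely many graph-valued hidden graphs. The positive-definiteness identity resolves it cleanly, but I would flag that the statement is an expressivity (existence) result, guaranteeing that suitable $\{H_i\}$ and $\rho$ exist, not that training necessarily recovers them.
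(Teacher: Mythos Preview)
Your proposal is correct and follows essentially the same approach as the paper: argue that positive-definiteness together with a sufficiently rich $\mathcal{H}$ makes $\Phi$ injective on isomorphism classes, then compose with the injective $\rho$. The paper's proof simply asserts the existence of a separating $H_j\in\mathcal{H}$, whereas you actually justify it via the RKHS norm identity $\|\phi(G)-\phi(G')\|^2=0$ to show that $G$ or $G'$ itself already serves as a probe; this fills in a step the paper leaves implicit but does not change the overall strategy.
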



\begin{corollary} \label{no less than SWL}
Let $G$ and $G'$ be two graphs, and let $\pi$ denote a fixed subgraph extraction policy. If the SWL test distinguishes $G$ and $G'$ under policy $\pi$, then \ourmethod also distinguishes $G$ and $G'$ under the same policy, provided the conditions in \cref{sufficient hidden graphs} hold.
\end{corollary}

\cref{no less than SWL} shows that the expressiveness of \ourmethod is more powerful than SWL. The proof of \cref{sufficient hidden graphs} and \cref{no less than SWL} is given in Appendix A.2 and A.3.

\subsection{Complexity Analysis}
The complexity of \ourmethod consists of two main parts: anonymous walk-based subgraph extraction, and random walk kernel with hidden graphs. We denote the set of anonymous walks of length $l$ starting from node $v$ in graph $G$ as $\Omega^l(G, v)$, therefore, the complexity for sampling anonymous walks is $\mathcal{O}(nl\cdot\vert \Omega^l(G, v) \vert)$ where $n$ is the total number of nodes. For computing the RWK in \cref{derivation of K}, since the adjacency matrix is stored as a sparse matrix with $m$ non-zero entries, each subgraph expert calculation takes a computation time of $\mathcal{O}(Pf(Ns(s+n) + m))$, where $P$ is the maximum length of random walks, $f$ is the feature dimension, $N$ is hidden graphs number, and $s$ is the hidden graph size. The training procedure of \ourmethod is given in Appendix B.1.

\begin{table*}[ht]
    \centering
    \small
    \setlength{\tabcolsep}{1mm}
    \caption{Graph classification accuracies and deviations ($\%\pm\mathrm{std}$) of the proposed model and the baselines on 8 real-world graph datasets. The best results are in \textbf{bold} and the second-best results are \underline{underlined}.}
    \resizebox{0.95\textwidth}{!}{
    \begin{tabular}{cl|cccccccc}
        \toprule
       & Methods  & \textbf{MUTAG} & \textbf{D\&D} & \textbf{NCI1} & \textbf{PROTEINS} & \textbf{ENZYMES} & \textbf{IMDB-BINARY} & \textbf{IMDB-MULTI} & \textbf{REDDIT-BINARY} \\
        \midrule
        \midrule
        \parbox[t]{4pt}{\multirow{3}{*}{\rotatebox[origin=c]{90}{GK}}} 
        & GL  & 80.8 $\pm$ 6.4  & 75.4 $\pm$ 3.4  & 61.8 $\pm$ 1.7 & 71.6 $\pm$ 3.1 & 25.1 $\pm$ 4.4 & 63.3 $\pm$ 2.7 & 39.6 $\pm$ 3.0 & 76.6 $\pm$ 3.3   \\
       & SP  & 80.2 $\pm$ 6.5 & 78.3 $\pm$ 4.0 & 66.3 $\pm$ 2.6 & 71.9 $\pm$ 6.1 & 38.3 $\pm$ 8.0 & 57.5 $\pm$ 5.4 & 40.5 $\pm$ 2.8 & 75.5 $\pm$ 2.1  \\
       & WL  & 84.6 $\pm$ 8.3 & 78.1 $\pm$ 2.4 & \textbf{84.8 $\pm$ 2.5} & 73.8 $\pm$ 4.4 & 50.3 $\pm$ 5.7 & 72.8 $\pm$ 4.5 & 51.2 $\pm$ 6.5 & 74.9 $\pm$ 1.8  \\
        \midrule
        \parbox[t]{4pt}{\multirow{4}{*}{\rotatebox[origin=c]{90}{MPNN}}} 
       & GCN  & 74.3 $\pm$ 8.1 & 72.1 $\pm$ 0.3 & 80.2 $\pm$ 2.0 & 75.5 $\pm$ 1.6 & 57.8 $\pm$ 0.7 & \underline{74.0 $\pm$ 3.4} & \underline{51.9 $\pm$ 3.8} & 68.3 $\pm$ 1.1  \\
       & GAT &  86.4 $\pm$ 5.7 & 73.1 $\pm$ 3.4 & 82.4 $\pm$ 1.1 & 76.8 $\pm$ 1.7 & \underline{62.5 $\pm$ 6.1} & 72.0 $\pm$ 2.7 & 43.5 $\pm$ 3.5 & 90.8 $\pm$ 1.3 \\
      &  GIN  & 84.7 $\pm$ 6.7 & 75.3 $\pm$ 2.9 & 80.0 $\pm$ 1.4 & 73.3 $\pm$ 4.0 & 59.6 $\pm$ 4.5 & 71.2 $\pm$ 3.9 & 48.5 $\pm$ 3.3 & 89.9 $\pm$ 1.9  \\
       & SAGE  & 83.6 $\pm$ 9.6  & 72.9 $\pm$ 2.0 & 76.0 $\pm$ 1.8 & 73.0  $\pm$ 4.5 & 58.2 $\pm$ 6.0 & 68.8 $\pm$ 4.5 & 47.6 $\pm$ 3.5 & 84.3 $\pm$ 1.9  \\
        \midrule
        \parbox[t]{4pt}{\multirow{7}{*}{\rotatebox[origin=c]{90}{GK-based NN}}} 
       & RWNN  & 88.1 $\pm$ 4.8 & 76.9 $\pm$ 4.6 & 73.2 $\pm$ 2.0 & 74.1 $\pm$ 2.8 & 57.4 $\pm$ 4.9 & 70.6 $\pm$ 4.4 & 48.8 $\pm$ 2.9 & 90.3 $\pm$ 1.8  \\
       & KerGNN  & 72.7 $\pm$ 0.9 & 78.9 $\pm$ 3.5 & 76.3 $\pm$ 2.6 & 75.5 $\pm$ 4.6 & 55.0 $\pm$ 5.0 & 73.7 $\pm$ 4.0 & 50.9 $\pm$ 5.1 & 82.0 $\pm$ 2.5  \\
      &  GNN-AK  & 91.3 $\pm$ 7.0 & 80.3 $\pm$ 0.7 & 79.1 $\pm$ 2.0 & \underline{77.1 $\pm$ 5.7} & \textbf{62.5 $\pm$ 1.8} & 68.8 $\pm$ 3.2 & 44.5 $\pm$ 1.3 & 88.3 $\pm$ 1.2  \\
      &  RWK$^+$CN  & 83.0 $\pm$ 6.4 & 74.8 $\pm$ 2.4 & 72.3 $\pm$ 1.3 & 76.2 $\pm$ 1.2 & 44.0 $\pm$ 2.7 & 71.6 $\pm$ 1.0 & 49.2 $\pm$ 0.6 & 77.5 $\pm$ 0.6  \\
      &  GIP  & \underline{92.1 $\pm$ 3.3} & 79.5 $\pm$ 0.5 & 75.2 $\pm$ 1.5 & 76.8 $\pm$ 1.8 & 60.7 $\pm$ 2.6 & 71.1 $\pm$ 2.0 & 41.7 $\pm$ 1.5 & 82.7 $\pm$ 1.0  \\
      &  GKNN-GL  & 85.2 $\pm$ 2.3 & 78.6 $\pm$ 2.6 & 71.5 $\pm$ 1.2 & 75.4 $\pm$ 1.1 & 58.7 $\pm$ 3.2 & 69.9 $\pm$ 1.4 & 45.7 $\pm$ 1.2 & 89.2 $\pm$ 1.4  \\
      &  GKNN-WL  & 85.7 $\pm$ 2.8 & \underline{81.2 $\pm$ 2.5} & 73.6 $\pm$ 1.3 & 74.9 $\pm$ 1.1 & 59.0 $\pm$ 2.6 & 69.7 $\pm$ 2.2 & 47.9 $\pm$ 1.8 & \underline{91.4 $\pm$ 1.3} \\
        \midrule
      &  \ourmethod  & \textbf{92.7 $\pm$ 2.5} & \textbf{83.7 $\pm$ 1.6} & \underline{83.7 $\pm$ 1.7} & \textbf{78.6 $\pm$ 4.4} & \textbf{62.5 $\pm$ 1.8} & \textbf{83.0 $\pm$ 3.2} & \textbf{52.6 $\pm$ 1.1} & \textbf{92.1 $\pm$ 1.4}  \\
        \bottomrule
    \end{tabular} 
    } \label{graph classification: real world}
\end{table*}

\section{Experiments}
We conduct comprehensive experiments to validate the effectiveness and generalization of \ourmethod. The ablation study is then given to help us better understand our model. 

\subsection{Experiment Settings}
\paragraph{Datasets.} We conduct experiments on both graph-level and node-level tasks across a wide range of graph datasets. Their details and statistics are presented in Appendix B.2.
\begin{itemize}
    \item \textbf{Graph Classification:} We choose both real-world and synthetic graph datasets to evaluate our model. For the real-world ones, we use 8 publicly available graph classification datasets, including 5 biological and chemical compounds: MUTAG \citep{MUTAG}, D\&D \citep{DD}, NCI1 \citep{NCI1}, PROTEINS, ENZYMES \citep{PROTEINS_ENZYMES}, and 3 social interaction datasets: IMDB-BINARY, IMDB-MULTI, and REDDIT-BINARY \citep{IMDB_REDDIT}. For the synthetic ones, we include GraphFive and GraphCycle. GraphFive consists of five local structure classes: caveman, cycle, grid, ladder, and star. GraphCycle consists of two classes: Cycle and Non-Cycle.
    \item \textbf{Node Classification:} We choose 9 widely used node classification benchmarks, including 3 homophilic and 6 heterophilic graphs. We choose the most common citation networks \citep{GCN}: Cora, Citeseer, and Pubmed as our homophilic datasets. We consider Wikipedia networks (Chameleon and Squirrel), Actor co-occurrence network (Actor), and WebKB (Cornell, Texas, and Wisconsin) \citep{Geom-GCN} as heterophilic datasets.
\end{itemize}

\paragraph{Compared Methods.}
We compare the proposed model with the following baselines, which can be categorized into three categories: (1) Graph kernel methods: graphlet kernel (GL) \citep{graphletkernel}, shortest path kernel (SP) \citep{shortestpathkernel} and Weisfeiler-Lehman subtree kernel (WL) \citep{WLKernel}. (2) widely used MPNN-based GNNs: GCN \citep{GCN}, GAT \citep{GAT}, GIN \citep{GIN}, and GraphSAGE \citep{GraphSAGE}. (3) Graph kernel-based GNNs: RWNN \citep{RWNN}, KerGNN \citep{KerGNNs}, GNN-AK \citep{GNN_AK}, RWK$^+$CN \citep{RWK}, GIP \citep{GIP}, GKNN \citep{GKNN}.

\paragraph{Setup.} To make a fair comparison with other baselines, for graph classification datasets, we follow the 10-fold cross-validation for model assessment and use the same splits as described in \citep{FairComparison}. For node classification datasets, we use the public splits for homophilous graphs, and each heterophilous graph is split into 60\%/20\%/20\% for training, validation, and testing, respectively. 
The implementation details are presented in Appendix B.3.


\begin{table}[htbp]
    \centering
    \small
    \setlength{\tabcolsep}{1mm}
    \caption{Graph classification accuracies and F1 scores ($\%\pm\mathrm{std}$) on synthetic datasets. The best results are in \textbf{bold} and the second-best results are \underline{underlined}.}
    \resizebox{0.90\linewidth}{!}{
    \begin{tabular}{l|cc|cc}
    \toprule
    Datasets $\rightarrow$ & \multicolumn{2}{c}{\textbf{GraphFive}} & \multicolumn{2}{c}{\textbf{GraphCycle}} \\
    Methods $\downarrow$ & Acc. & F1 & Acc. & F1 \\
    \midrule
    SP & 56.7 $\pm$ 1.3  & 52.6 $\pm$ 1.0  & 79.8 $\pm$ 1.0  & 73.5 $\pm$ 0.9  \\
    WL & 60.1 $\pm$ 0.4  & \underline{56.3 $\pm$ 0.3}  & 81.1 $\pm$ 0.8  & 76.4 $\pm$ 0.8  \\
    \midrule
    GCN & 59.0 $\pm$ 2.3 & 53.8 $\pm$ 0.9 & 81.2 $\pm$ 1.1 & 73.0 $\pm$ 1.0 \\
    SAGE & 59.5 $\pm$ 0.5 & 51.2 $\pm$ 0.5 & 79.8 $\pm$ 1.1 & 71.2 $\pm$ 2.6 \\
    \midrule
    RWNN & 58.8 $\pm$ 1.5 & 53.7 $\pm$ 1.0 & 80.5 $\pm$ 1.7 & \underline{78.5 $\pm$ 2.8} \\
    KerGNN & 57.9 $\pm$ 0.5 & 49.7 $\pm$ 0.1 & 79.3 $\pm$ 0.8 & 71.8 $\pm$ 0.6 \\
    GIP & \underline{60.4 $\pm$ 1.8} & 55.9 $\pm$ 2.5 & \textbf{82.5 $\pm$ 1.2} & 77.9 $\pm$ 5.7 \\
    \ourmethod & \textbf{62.3 $\pm$ 1.2} & \textbf{57.4 $\pm$ 0.9} & \underline{82.3 $\pm$ 1.8} & \textbf{78.7 $\pm$ 0.8} \\
    \bottomrule
    \end{tabular}}
    \label{graph classification: synthetic datasets}
\end{table}

\subsection{Performance Evaluation}
\paragraph{Graph Classification.} 
The graph classification results on real-world and synthetic graph datasets are shown in \cref{graph classification: real world} and \cref{graph classification: synthetic datasets}, respectively. We observe that traditional graph kernels remain competitive, occasionally even outperforming MPNN-based GNNs on datasets with relatively small average graph sizes, such as MUTAG, NCI1, IMDB-BINARY, and IMDB-MULTI. However, on datasets with larger average numbers of nodes, MPNN-based models exhibit advantages in both classification accuracy and performance stability. Meanwhile, GK-based GNNs inherit the expressive power of classical graph kernels and thus achieve strong overall performance on graph classification benchmarks. A similar trend can also be found on the synthetic datasets, further supporting the generality of these findings. Among GK-based GNNs, \ourmethod consistently achieves the best or near-best results over 10 real-world and synthetic graph datasets. We attribute this to the MoE design, which enables our model to dynamically select and integrate informative subgraphs, effectively capturing diverse local patterns in different types of graphs.

\begin{table*}[ht]
    \centering
    \small
    \setlength{\tabcolsep}{1mm}
    \caption{Node classification accuracies on real-world graph datasets ($\%\pm\mathrm{std}$). The best results are in \textbf{bold} and the second-best results are \underline{underlined}. - means Out of Resources, either time or GPU memory.}
    \resizebox{0.90\textwidth}{!}{
    \begin{tabular}{ll|ccc cccccc}
        \toprule
        & Methods & \textbf{Cora} & \textbf{Citeseer} & \textbf{Pubmed} & \textbf{Chameleon} & \textbf{Squirrel} & \textbf{Actor} & \textbf{Cornell} & \textbf{Texas} & \textbf{Wisconsin} \\
        \midrule
        \midrule
        \parbox[t]{4pt}{\multirow{4}{*}{\rotatebox[origin=c]{90}{MPNN}}}
       & GCN  & 80.8 $\pm$ 0.5 & 70.8 $\pm$ 0.5 & 79.0 $\pm$ 0.3 & 59.8 $\pm$ 2.6 & 36.9 $\pm$ 1.3 & 30.3 $\pm$ 0.8 & 53.8 $\pm$ 8.6 & 54.1 $\pm$ 4.4 & 50.4 $\pm$ 7.6  \\
       & GAT & 83.0 $\pm$ 0.7 & 71.9 $\pm$ 1.4 & 85.3 $\pm$ 0.6 & 54.7 $\pm$ 2.0 & 30.6 $\pm$ 2.1 & 27.4 $\pm$ 0.9 & 55.0 $\pm$ 5.6 & 52.2 $\pm$ 6.6 & 54.3 $\pm$ 5.6 \\
       & GIN  & 81.9 $\pm$ 0.6 & 68.1 $\pm$ 0.6 & \underline{85.7 $\pm$ 0.3} & 56.2 $\pm$ 1.9 & 35.8 $\pm$ 1.3 & 23.6 $\pm$ 0.8 & 38.7 $\pm$ 5.2 & 29.0 $\pm$ 4.4 & 45.7 $\pm$ 4.7  \\
       & SAGE  & \textbf{87.7 $\pm$ 1.8} & \underline{76.7 $\pm$ 0.6} & \textbf{87.1 $\pm$ 0.7} & 55.1 $\pm$ 1.8 & 39.8 $\pm$ 1.9 & 24.2 $\pm$ 1.4 & 71.0 $\pm$ 4.8 & 67.4 $\pm$ 3.1 & 73.9 $\pm$ 6.8  \\
        \midrule
        \parbox[t]{4pt}{\multirow{7}{*}{\rotatebox[origin=c]{90}{GK-based NN}}}
       & RWNN & 71.8 $\pm$ 2.1 & 68.9 $\pm$ 0.5 & 64.6 $\pm$ 0.3 & 46.3 $\pm$ 2.7  & 29.8 $\pm$ 2.4 & 34.1 $\pm$ 0.3 & 62.8 $\pm$ 5.2 & 61.5 $\pm$ 7.5  & 75.4 $\pm$ 6.2  \\
       & KerGNN  & 72.9 $\pm$ 0.7 & 70.3 $\pm$ 0.4 & 68.1 $\pm$ 0.2 & 48.2 $\pm$ 2.5 & 33.9 $\pm$ 1.6 & 34.6 $\pm$ 0.4 & 64.9 $\pm$ 3.9 & 79.9 $\pm$ 7.1 & \underline{87.5 $\pm$ 4.6}  \\
       & GNN-AK  & 58.9 $\pm$ 0.3 & 50.1 $\pm$ 1.6 & 73.8 $\pm$ 0.6 & 56.8 $\pm$ 1.1 & 34.5 $\pm$ 1.1 & 35.8 $\pm$ 2.4 & \underline{72.4 $\pm$ 1.4} & \underline{83.2 $\pm$ 1.6}  & 84.3 $\pm$ 2.0\\
       & RWK$^+$CN  & 72.5 $\pm$ 1.7 & \textbf{76.7 $\pm$ 0.2} & 68.9 $\pm$ 0.5 & 47.4 $\pm$ 1.3 & 35.1 $\pm$ 1.6 & \underline{36.0 $\pm$ 0.2} & 72.3 $\pm$ 3.1 & 69.4 $\pm$ 6.0 & 73.5 $\pm$ 2.4 \\
       & GIP  & 76.6 $\pm$ 1.5 & 61.6 $\pm$ 0.1 & 77.5 $\pm$ 0.4 & \textbf{66.6 $\pm$ 0.7} & \underline{56.3 $\pm$ 1.1} & 24.9 $\pm$ 1.9 & 54.1 $\pm$ 5.3 & 82.4 $\pm$ 6.8 & 78.4 $\pm$ 2.0  \\
       & GKNN-GL  & - & - & - & - & - & - & - & - & - \\
       & GKNN-WL  & - & - & - & - & - & - & 54.1 $\pm$ 0.1 & 54.1 $\pm$ 0.0 & 49.2 $\pm$ 1.0 \\
        \midrule
       & \ourmethod  & \underline{83.9 $\pm$ 2.0} & 74.3 $\pm$ 0.4 & 79.8 $\pm$ 0.3 & \underline{64.5 $\pm$ 1.1} & \textbf{58.8 $\pm$ 0.7} & \textbf{39.6 $\pm$ 1.1} & \textbf{73.0 $\pm$ 3.4} & \textbf{83.7 $\pm$ 3.8} & \textbf{87.5 $\pm$ 4.3} \\
        \bottomrule
    \end{tabular} 
    } \label{node classification}
\end{table*}

\paragraph{Node Classification.}
The node classification results are shown in \cref{node classification}. We find that MPNN models generally perform well in homophilous graphs, where neighboring nodes are likely to share the same label. However, their performance tends to degrade under heterophily, where aggregating features from neighbors with different labels may dilute the target node's embedding. Conversely, GK-based models often perform better in heterophilous graphs, where the local structural patterns provide useful discriminative signals beyond label similarity. With the help of MoE design, our model significantly improves upon GK-based GNNs in both scenarios. Especially on heterophilous graphs, \ourmethod further amplifies the advantages of GK-based approaches.

\begin{figure}[htbp]
    \centering
    \includegraphics[width=0.90\linewidth]{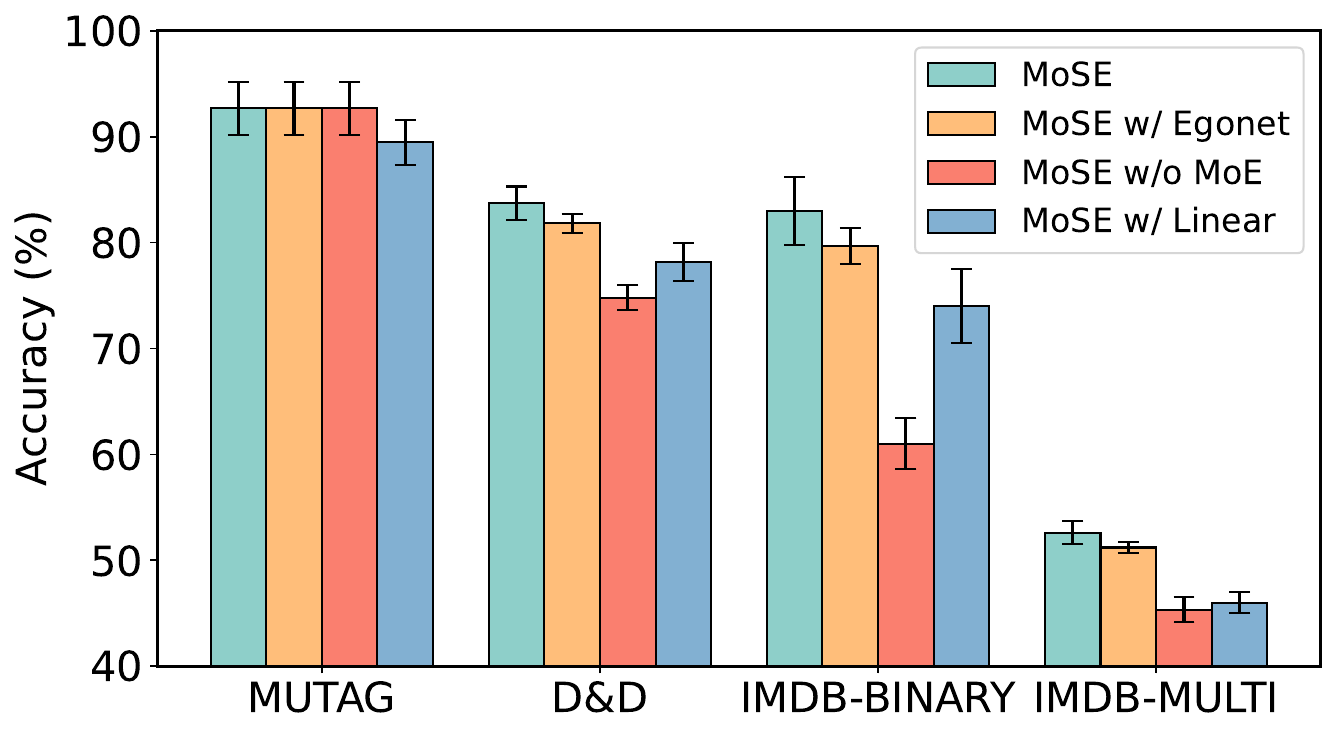}
    \caption{Ablation study with different \ourmethod variants.}
    \label{fig: ablation study}
\end{figure}

\subsection{Ablation Study and Visualization}
To better understand the contribution of each component in our model, we conduct ablation studies on several variants of \ourmethod. Specifically, we examine the effect of the subgraph extraction strategy, removing the MoE mechanism, and substituting the gating with a linear one. The results are shown in \cref{fig: ablation study}. Due to the small size of MUTAG, only the gating mechanism produces a noticeable performance difference. On other datasets, we find that the MoE component has the largest impact on performance, confirming its importance for capturing diverse subgraph semantics. Furthermore, subgraph-aware gating also contributes significantly by enabling the model to better associate input subgraphs with suitable experts. Lastly, the anonymous walk-based subgraph extraction is capable of isolating more discriminative local patterns.

\begin{figure}[hbtp]
    \centering
    \includegraphics[width=0.90\linewidth]{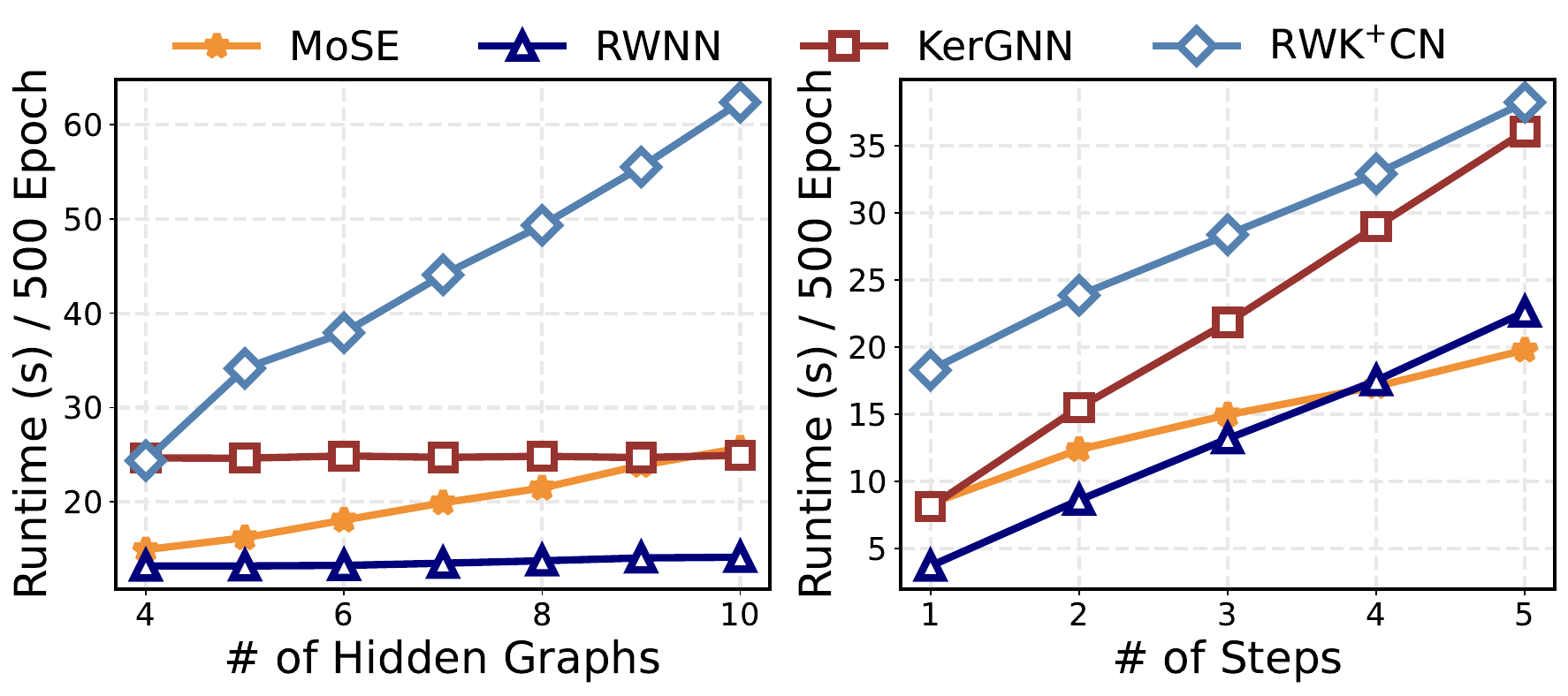}
    \caption{Running time comparison under different settings.}
    \label{fig:runtime comparison}
\end{figure}

We also evaluate the time complexity of our model under different configurations. \cref{fig:runtime comparison} reports the runtime per 500 epochs when varying the number of hidden graphs and RWK max steps, compared against other RWK-based models. The results show that the runtime of \ourmethod remains within a reasonable range, and scales smoothly with the number of steps. This confirms that our model offers an affordable trade-off between expressiveness and efficiency. 

\begin{figure}[htbp]
    \centering
    \includegraphics[width=0.85\linewidth]{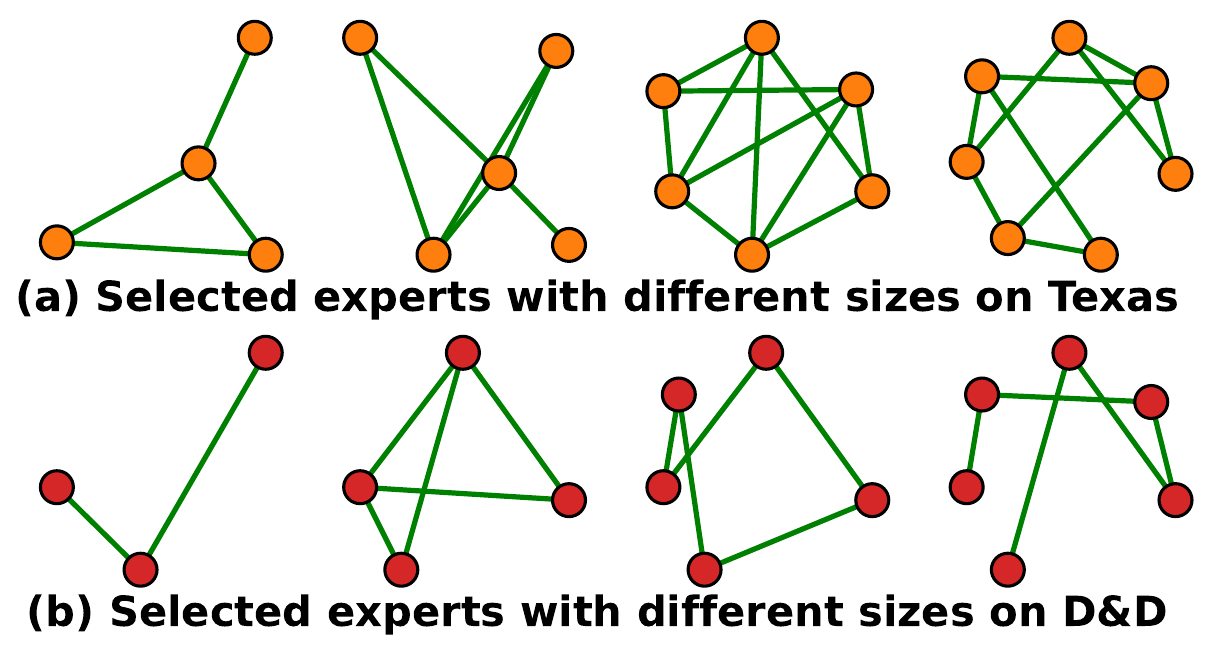}
    \caption{Subgraph experts visualization.}
    \label{fig: visualization}
\end{figure}

Furthermore, the visualization of the learned subgraph experts, presented in \cref{fig: visualization}, showcases the diversity and interpretability of our model. Compared to the case study in Introduction, our experts exhibit fewer redundant nodes while preserving key structural signals. The hyperparameter sensitivity analysis and additional visualizations on more datasets can be found in the Appendix B.4 and B.5.

\section{Related Work}
\paragraph{Hybrid Graph Kernels and GNNs.}
Graph kernels and GNNs are both fundamental tools for graph learning. Graph kernels, rooted in classical machine learning, provide interpretability by measuring graph similarities. In contrast, GNNs excel at efficient message passing, enabling flexible and scalable representation learning. While many works have attempted to bridge graph kernels and GNNs, existing approaches can be categorized into two aspects: using GNNs to enhance or design kernels \citep{hazan2015steps, GNTK},
and incorporating kernel principles into GNN design \citep{SCKN, KerGNNs, RWK, GCKN}. 
Our work falls into the second one, with a particular focus on integrating the widely used random walk kernel into GNNs to enhance their structural pattern awareness.

\paragraph{Graph Mixture of Experts.}
The Mixture of Experts (MoE) is a long-standing machine learning paradigm \citep{Longhistory1, Longhistory2} that trains multiple specialized experts to cooperatively enhance training efficiency and model performance. While MoE has been widely used in fields such as large language models \citep{switch_transformer}, video recognition \citep{MEID}, its application to graph remains relatively underexplored. Recently, a few pioneering efforts have extended MoE to the graph domain. GMoE \citep{GMOE} uses GNN with different hop numbers as experts to learn hop information. MoG \citep{MoG} incorporates multiple sparsifiers to personalize unique sparsity level and pruning criteria for each node. GraphMoRE \citep{GraphMoRE} introduces diverse Riemannian experts to embed the graph into manifolds with varying curvatures. In graph fairness learning, G-Fame \citep{G-Fame} is proposed as a plug-and-play method to learn distinguishable embeddings with multiple experts capturing different aspects of knowledge.

\section{Conclusion}
In this paper, we focus on subgraph pattern learning and propose \ourmethod framework for flexibility and adaptability across diverse graph tasks. By leveraging multiple subgraph experts with a newly designed subgraph extraction strategy and routing mechanism, our model captures meaningful local structures with controllable complexity. Extensive experimental results demonstrate that our model achieves a favorable balance between performance, efficiency, and interpretability.
\bibliography{aaai2026}
\clearpage
\appendix
\section{Theoretical Analysis} \label{Theoretical Analysis}
\subsection{Proof of Proposition 1}
Before we proceed with the proof, we first restate the relevant definition and proposition here for convenience. We also provide an illustrative example of anonymous walks in \cref{fig:Anonymous Walks Examples} to facilitate understanding.
\begin{figure}[htbp]
    \centering
    \includegraphics[width=1\linewidth]{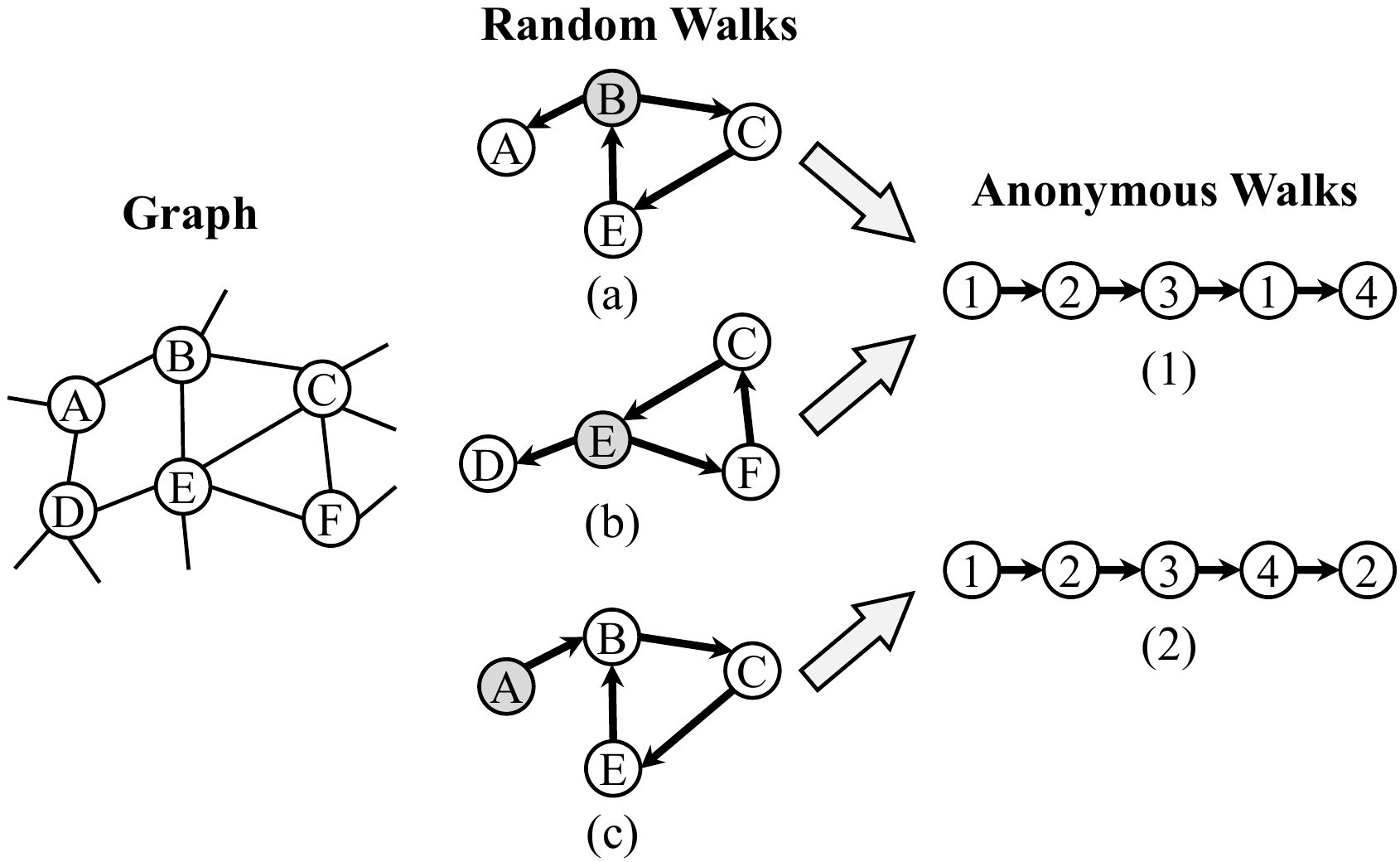}
    \caption{Anonymous walks examples with the starting node colored in grey. Random walks (a) and (b) share the same anonymous walk type, even though the node sets are different. The same node collections may refer to different anonymous walk types as walk (a) and (c).}
    \label{fig:Anonymous Walks Examples}
\end{figure}

\begin{definition}[\textbf{Anonymous Walk}]
Given a random walk $w=(v_0, v_1, \ldots, v_L)$ on a graph, the corresponding anonymous walk is defined as a sequence of integers $a=(\gamma_0, \gamma_1, \ldots, \gamma_L)$, where $\gamma_i=\min\mathrm{pos}(w, v_i)$, and $\mathrm{pos}(w, v_i)$ returns the position at which the node $v_i$ first appears in the walk $w$. We denote the mapping of a random walk $w$ to the anonymous walk $a$ by $w\mapsto a$.
\end{definition}

\begin{proposition} \label{AW reconstruct Ego Graph}
Let $G=(\mathcal{V}, \mathcal{E})$ be a graph and $v \in \mathcal{V}$. A sufficiently rich distribution $\mathcal{D}_l$ over length-$l$ anonymous walks starting at $v$ with $l = \mathcal{O}(|\mathcal{E}'|)$, is sufficient to reconstruct the ego graph $\mathcal{B}(v, h) = (\mathcal{V}', \mathcal{E}')$ centered at $v$ with radius $h$.
\end{proposition}
\begin{proof}
    Refer to the proof of Theorem 1 in \citep{AWequalKEGO}.
\end{proof}

\subsection{Proof of Proposition 2}
\begin{proposition} \label{sufficient hidden graphs}
Let $\mathcal{H}=\{H_i\}_{i=1}^N$ be a sufficiently set of hidden graphs, and let $\mathcal{K}^{(p)}$ be a positive-definite random walk kernel that distinguishes all non-isomorphic subgraphs of bounded size. If $\rho$ is injective, then the representation $c_v^{(t+1)}$ computed by \ourmethod is injective over the set of $G_v$, i.e., \ourmethod simulates one round of SWL node refinement.
\end{proposition}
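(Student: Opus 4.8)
The plan is to prove the injectivity of the composite map $c_v^{(t+1)} = \rho(\Phi(G_v))$, where $\Phi(G_v) = [\mathcal{K}^{(p)}(G_v, H_i)]_{i=1}^N$, by separating its two ingredients. First I would observe that, since $\rho$ is injective by hypothesis, it suffices to establish that the kernel feature vector $\Phi$ is itself injective over the isomorphism classes of extracted subgraphs; the outer MLP then only needs to preserve an existing separation. This reduces the entire statement to a claim about the discriminative power of finitely many random walk kernel evaluations against the hidden graphs.

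Next I would unpack the distinguishing hypothesis. Positive-definiteness guarantees that $\mathcal{K}^{(p)}$ is a genuine inner product in some feature space, so that the assumption ``$\mathcal{K}^{(p)}$ distinguishes all non-isomorphic subgraphs of bounded size'' can be read as: the map $G \mapsto \mathcal{K}^{(p)}(G, \cdot)$ is injective on isomorphism classes. Operationally, this means that for any two non-isomorphic subgraphs $G_1 \not\cong G_2$ there is at least one probe graph $H$ with $\mathcal{K}^{(p)}(G_1, H) \neq \mathcal{K}^{(p)}(G_2, H)$; that is, every non-isomorphic pair is separable by a single hidden graph.

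The key finiteness step comes next. Because the extracted subgraphs $G_v$ have bounded size and, in the SWL setting, carry colors drawn from a finite palette at iteration $t$, there are only finitely many isomorphism classes $\{C_1, \ldots, C_M\}$ to distinguish, hence at most $\binom{M}{2}$ pairs. For each pair I would select one separating hidden graph supplied by the previous step, and take their union to obtain a finite family whose joint kernel evaluations distinguish every pair simultaneously. This is precisely where ``a sufficiently large set of hidden graphs'' is invoked: once $N$ is large enough, and the learnable $H_i$ are rich enough to realize these separating probes, the vector $\Phi$ is injective, and composing with the injective $\rho$ yields injectivity of $c_v^{(t+1)}$. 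Finally I would match this to the SWL definition: an injective function of the colored subgraph $G_v^{(t)}$ behaves exactly as the injective $\operatorname{hash}(G_v^{(t)})$, so a single layer of \ourmethod realizes one round of SWL node refinement.

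The main obstacle is the passage from ``each pair is separable by some hidden graph'' to ``one fixed finite vector $\Phi$ separates all pairs at once,'' since a probe that distinguishes one pair may collapse another; the argument genuinely relies on the finiteness of the bounded-size colored subgraph classes to accumulate enough separating probes. A secondary difficulty is arguing that hidden graphs of the sizes used by the experts, combined with $p$-step random walk kernels, are expressive enough to \emph{realize} the required separating probes, which is exactly why the boundedness assumption on subgraph size is indispensable.
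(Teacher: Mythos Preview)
Your proposal is correct and follows essentially the same route as the paper: both argue that for any non-isomorphic pair $G_v \not\simeq G_u$ some hidden graph $H_j$ satisfies $\mathcal{K}^{(p)}(G_v,H_j)\neq\mathcal{K}^{(p)}(G_u,H_j)$, hence $\Phi$ is injective, and then compose with the injective $\rho$ to conclude. You are simply more explicit than the paper about the finiteness step---accumulating separating probes over the finitely many bounded-size colored isomorphism classes---whereas the paper absorbs this entirely into the phrase ``$\mathcal{H}$ is sufficiently rich.''
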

\begin{proof}
Since $\mathcal{K}^{(p)}$ is positive-definite and $\mathcal{H}$ is sufficiently rich, for any pair of non-isomorphic subgraphs $G_v \not\simeq G_u$, there exists at least one hidden graph $H_j \in \mathcal{H}$ such that $\mathcal{K}^{(p)}(G_v, H_j) \neq \mathcal{K}^{(p)}(G_u, H_j)$. Hence, $\Phi(G_v) \neq \Phi(G_u)$. As $\rho$ is injective, this implies $c_v^{(t+1)} \neq c_u^{(t+1)}$. Therefore, the overall mapping $G_v \mapsto c_v^{(t+1)}$ is injective over the set of rooted subgraphs, consistent with the SWL refinement definition.
\end{proof}

\subsection{Proof of Corollary 1}
\begin{corollary} \label{no less than SWL}
Let $G$ and $G'$ be two graphs, and let $\pi$ denote a fixed subgraph extraction policy. If the SWL test distinguishes $G$ and $G'$ under policy $\pi$, then \ourmethod also distinguishes $G$ and $G'$ under the same policy, provided the conditions in \cref{sufficient hidden graphs} hold.
\end{corollary}

We prove the corollary by introducing the following definition \citep{SWL, SWL2} and lemma first.

\begin{definition} \label{def:3wl}
Let $C_1$ and $C_2$ be two color refinement algorithms, and denote $c_i(G)$, $i \in \{1, 2\}$ as the graph representation computed by $C_i$ for graph $G$. We say:
\begin{itemize}
    \item $C_1$ is \textit{no less powerful} than $C_2$, if there exist a pair of non-isomorphic graphs $G$ and $G'$ such that $c_1(G) \ne c_1(G')$ and $c_2(G) = c_2(G')$.
    \item $C_1$ is \textit{more powerful} than $C_2$, if for any pair of graphs $G$ and $G'$, if $c_2(G) \ne c_2(G')$, then $c_1(G) \ne c_1(G')$.
\end{itemize}
\end{definition}

\begin{lemma} \label{lem:1}
If the color of nodes in a graph cannot be further refined by $C_2$ when initialized with the stable coloring from $C_1$, then $C_1$ is more powerful than $C_2$.
\end{lemma}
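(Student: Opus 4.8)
The plan is to prove the lemma through the lattice of node colorings (equivalently, partitions of the vertex set) together with the monotonicity of color-refinement operators, and then to transfer the resulting node-level refinement relation into the graph-level distinguishing power demanded by \cref{def:3wl}. Throughout I assume both $C_1$ and $C_2$ start from the same initial coloring $\pi_0$ (node attributes or a constant color) and that each is genuinely a color-refinement operator, i.e.\ every iteration only splits color classes and never merges them. I write $\pi_1^\ast$ and $\pi_2^\ast$ for the stable colorings produced by $C_1$ and $C_2$, and I say a coloring $\alpha$ \emph{refines} $\beta$, written $\alpha \preceq \beta$, when $\alpha(u)=\alpha(w)$ implies $\beta(u)=\beta(w)$ for all nodes $u,w$.

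First I would record the key monotonicity property of the one-step $C_2$ operator $R_2$: if $\alpha \preceq \beta$ then $R_2(\alpha) \preceq R_2(\beta)$, and hence the relation is preserved through every iteration and at the fixed point. This holds because two nodes sharing a color after one $C_2$-step must already share their previous color and their neighbor (resp.\ subgraph) color multisets; applying the well-defined color-merging map induced by $\alpha \preceq \beta$ to those multisets shows the coarser coloring $\beta$ must agree on them as well. I expect this verification to be essentially mechanical for 1-WL-style and SWL-style updates, but it is the technical heart and the \emph{main obstacle}, since it must be argued for the general operators $C_1,C_2$ rather than only for 1-WL, and it is what licenses comparing the two fixed points reached from different starts.

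Next I would invoke the hypothesis. By assumption, initializing $C_2$ with $\pi_1^\ast$ yields no further refinement, so $\pi_1^\ast$ is a fixed point of $R_2$. Because $C_1$ only splits classes, $\pi_1^\ast \preceq \pi_0$. Monotonicity then gives $\pi_1^\ast = R_2^{k}(\pi_1^\ast) \preceq R_2^{k}(\pi_0)$ for every $k$, and passing to the limit yields $\pi_1^\ast \preceq \pi_2^\ast$. Thus on every graph the stable $C_1$-coloring is at least as fine as the stable $C_2$-coloring.

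Finally I would lift this node-level relation to distinguishing power. Applying the whole argument to the disjoint union $G \sqcup G'$, the relation $\pi_1^\ast \preceq \pi_2^\ast$ furnishes a single map sending each $C_1$-color to the $C_2$-color of any node carrying it; since it is defined globally on the union, this map converts the multiset fingerprint $c_1(\cdot)$ into $c_2(\cdot)$ consistently across both $G$ and $G'$. Consequently $c_1(G)=c_1(G')$ forces $c_2(G)=c_2(G')$, which is exactly the contrapositive of the requirement in \cref{def:3wl} that $C_1$ be more powerful than $C_2$. The only additional care here is to ensure the merging map is built on the disjoint union so that it acts identically on the fingerprints of the two graphs.
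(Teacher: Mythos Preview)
The paper does not actually prove \cref{lem:1}; it is stated alongside \cref{def:3wl} (with citations to the SWL literature) as an auxiliary tool and then immediately applied in the proof of \cref{no less than SWL}. So there is no in-paper proof to compare against.

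Your plan is the standard and correct argument: show that the stable coloring $\pi_1^\ast$ is a fixed point of the one-step operator $R_2$, use monotonicity of $R_2$ with respect to the refinement order together with $\pi_1^\ast \preceq \pi_0$ to obtain $\pi_1^\ast \preceq \pi_2^\ast$, and then lift to graph fingerprints via the disjoint union. The one place worth tightening is the passage to $G \sqcup G'$: you implicitly need the lemma's hypothesis (that $C_2$ cannot refine $\pi_1^\ast$) to hold on the disjoint union, not merely on each graph in isolation, and likewise that the $C_1$-stable coloring on $G \sqcup G'$ restricts to the $C_1$-stable colorings on $G$ and $G'$. For any color-refinement scheme whose update is determined by a node's local neighborhood or rooted subgraph (as 1-WL and SWL are), both facts are immediate because the operators act componentwise on a disjoint union; stating this explicitly closes the only small gap in your outline.
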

The above lemma allows us to assess the relative expressive power of two color refinement algorithms on a single graph instead of requiring a pair of graphs for comparison. We now prove that \ourmethod is more powerful than SWL.

\begin{proof}
Given a graph $G$, let $c_v^{M}$ denote the stable color of node $v$ from \ourmethod. We take these stable colors as the initial colors of SWL, \ie, $c_v^{S, (0)}=c_v^{M}$. For the first SWL iteration, we have:
\begin{equation}
    c_v^{S, (1)}=\operatorname{hash}(G_v^{S,(0)})=\operatorname{hash}\left(\left(\mathcal V_v,\mathcal E_v,c_v^M\right)\right)
\end{equation}

Now consider two arbitrary nodes $v,u$.
\begin{enumerate}[(i)]
    \item $c^{M}_v=c^{M}_u.$ By \cref{sufficient hidden graphs}, the rooted subgraphs $G_v$ and $G_u$ are color-isomorphic under $c^{M}$, hence the injectivity of the hash yields $\operatorname{hash}(G^{S, (0)}_v)=\operatorname{hash}(G^{S,(0)}_u)$ and therefore $c^{S,(1)}_v=c^{S,(1)}_u$. Thus, SWL cannot split a color class that \ourmethod has already formed.
    \item $c^{M}_v\neq c^{M}_u.$ Again by \cref{sufficient hidden graphs} the subgraphs $G_v$ and $G_u$ are not color-isomorphic, so $\operatorname{hash}(G^{(0)}_v)\neq\operatorname{hash}(G^{(0)}_u)$ and consequently $c^{S,(1)}_v\neq c^{S,(1)}_u$. Hence, SWL never merges two distinct \ourmethod\ classes.
\end{enumerate}


\end{proof}

\newpage
\section{Experimental Supplementary}
\subsection{Training Procedure of \ourmethod}
\begin{algorithm}[htbp]
    \caption{Training Procedure}
    \label{training algorithm}
    \SetKwInOut{Input}{\textbf{Input}}
    \SetKwInOut{Output}{\textbf{Output}}
    \Input{Extracted Subgraphs $G_v=(\mathcal{V}_v, \mathcal{E}_v)$ for $v\in\mathcal{V}$; Subgraph Experts $E=\{E_s:s=1, \dots, K\}$ where $E_s=\{H_1^s, \ldots, H_N^s\}$ with hidden graphs}
    \Output{Node Embeddings $\bm{h}(v)$ for $v\in\mathcal{V}$, Learned hidden graphs}
    \For{$epoch = 1, ..., M$}{
        \For{$v\in\mathcal{V}$}{
        Calculate the expert scores and select experts via Eqs.(7)-(9)\;
        Compute MoE loss $\mathcal{L}_{importance}$ via Eq. (15)\;
        \For{$p=1, ..., P$}{
            Calculate the $p$-step random walk kernel between $G_v$ and $H_i$ in selected experts via Eq. (11)\;
            Concatenate the kernel values and map the results as $\bm{h}_s(v)$ via Eq. (10)\;
        }
        Combine the node embedding with expert scores as $\bm{h}(v)$ via Eq. (12)\;
        \textcolor{gray}{(Readout the graph embedding $\bm{h}(G)$ for graph-level task)\;}
        }
        Compute the task loss $\mathcal{L}_{task}$\;
        Final loss $\mathcal{L}=\mathcal{L}_{task}+\beta\cdot\mathcal{L}_{importance}$\;
        Back propagation, update parameters\;
        }
\end{algorithm}

\subsection{Dataset Details and Statistics}
We here introduce the details of the datasets used in our work.

\begin{figure}[htbp]
    \centering
    \includegraphics[width=1\linewidth]{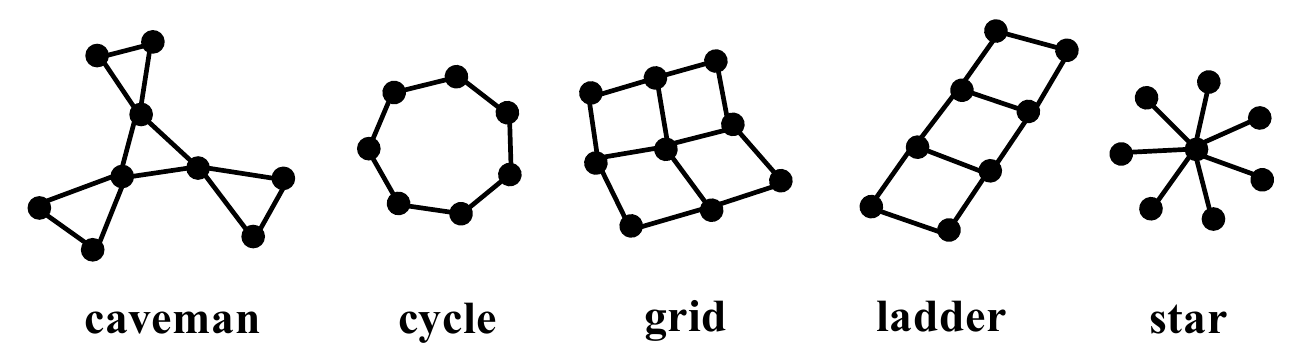}
    \caption{Illustration of the 5 types of graphs in synthetic dataset GraphFive}
    \label{fig:GraphFive}
\end{figure}

\paragraph{Graph Classification Datasets.} We choose 10 datasets, including 8 real-world graphs and 2 synthetic graphs. 

For real-world graphs, MUTAG \citep{MUTAG}, D\&D \citep{DD}, NCI1 \citep{NCI1}, PROTEINS, and ENZYMES \citep{PROTEINS_ENZYMES} are bio/chemo-informatics datasets. MUTAG consists of 188 mutagenic aromatic and heteroaromatic nitro compounds. The task is to predict whether or not each chemical compound has mutagenic effects on the Gram-negative bacterium Salmonella typhimurium. D\&D contains over a thousand protein structures. Each protein is a graph whose nodes correspond to amino acids. The task is to predict if a protein is an enzyme or not. NCI1 contains more than four thousand chemical compounds screened for activity against non-small cell lung cancer and ovarian cancer cell lines. PROTEINS consists of proteins represented as graphs where vertices are secondary structure elements, and there is an edge between two vertices if they are neighbors in the amino-acid sequence or in 3D space. The task is to classify proteins into enzymes and non-enzymes. ENZYMES contains 600 protein tertiary structures obtained from the BRENDA enzyme database. Each enzyme is a member of one of the Enzyme  Commission top level enzyme classes. IMDB-BINARY, IMDB-MULTI, and REDDIT-BINARY \citep{IMDB_REDDIT} are social interaction datasets. IMDB-BINARY and IMDB-MULTI are created from IMDB, an online database of information related to movies and television programs. The graphs contained in the two datasets correspond to movie collaborations. The vertices of each graph represent actors/actresses and two vertices are connected by an edge if the corresponding actors/actresses appear in the same movie. The task is to predict which genre an ego-network belongs to. REDDIT-BINARY contains graphs that model the social interactions between users of Reddit. Each graph represents an online discussion thread. The task is to classify graphs into either communities or subreddits. 

For synthetic datasets, we follow \citep{GIP} to generate GraphFive and GraphCycle. For GraphFive, we first generate 8~15 Barabási-Albert graphs as communities, each containing 10~200 nodes. Then, we connect the generated BA graphs in pre-defined five shapes: caveman, cycle grid, ladder, and star. To connect nodes in different clusters, we randomly add edges with a probability ranging from 0.05 to 0.15. We give an illustration of the 5 types of  graphs in \cref{fig:GraphFive}. For GraphCycle, we first generate 8-15 Barabási-Albert graphs as communities, each containing 10-200 nodes. Then, we connect the generated BA graphs in pre-defined two shapes: Cycle and Non-Cycle. To connect nodes in different clusters, we randomly add edges with a probability ranging from 0.05 to 0.15.
The statistics of these datasets are shown in \cref{tab: Statistics of graph classification datasets}.

\begin{table*}[htbp]
\caption{Statistics of the graph classification datasets} 
    \centering
    \resizebox{\textwidth}{!}{
    \begin{tabular}{c|cccccccccc}
    \toprule
    \textbf{Dataset} & MUTAG & D\&D & NCI1 & PROTEINS & ENZYMES & IMDB-BINARY & IMDB-MULTI & REDDIT-BINARY & GraphFive & GraphCycle \\
    \midrule
    \textbf{\# Graphs} & 188 & 1,178 & 4,110 & 1,113 & 600 & 1,000 & 1,500 & 2,000 & 5,000 & 2,000 \\
    \textbf{Type} & Biochem & Biochem & Biochem & Biochem & Biochem & Social & Social & Social & Synthetic & Synthetic\\
    \textbf{Max \# Nodes} & 28 & 5,748 & 111 & 620 & 126 & 136 & 89 & 3,782 & 225 & 513  \\
    \textbf{Min \# Nodes} & 10 & 30 & 3 & 4 & 2 & 12 & 7 & 6 & 133 & 122 \\
    \textbf{Avg \# Nodes} & 17.9 & 284.3 & 29.9 & 39.1 & 32.6 & 19.8 & 13.0 & 429.6 & 174.8 & 299.7 \\
    \textbf{Max \# Edges} & 33 & 14,267 & 119 & 1,049 & 149 & 1,249 & 1,467 & 4,071 & 1,480 & 2,346 \\
    \textbf{Min \# Edges} & 10 & 63 & 2 & 5 & 1 & 26 & 12 & 4 & 682 & 618\\
    \textbf{Avg. \# Edges} & 19.8 & 715.7 & 32.3 & 72.8 & 62.1 & 96.5 & 65.9 & 497.8 & 1033.5 & 1,641.5 \\
    \textbf{\# Class} & 2 & 2 & 2 & 2 & 6 & 2 & 3 & 2 & 5 & 2 \\
    \bottomrule
    \end{tabular}} \label{tab: Statistics of graph classification datasets}  
\end{table*}

\paragraph{Node Classification Datasets.} We choose 9 datasets, including 3 homophilous graphs and 6 heterophilous graphs. For homophilous graphs, we choose the most common citation networks \citep{GCN}: Cora, Citeseer, and Pubmed as our homophilic datasets. In these citation networks, nodes represent articles and edges represent the citation relationship between them. Node features are the word vectors and labels are the scientific fields of the papers. For heterophilous graphs, we consider Wikipedia networks (Chameleon and Squirrel), Actor co-occurrence network (Actor), and WebKB (Cornell, Texas, and Wisconsin) \citep{Geom-GCN} as small-scale heterophilic datasets. Wikipedia networks consist of web pages and mutual links between them. Node features are the informative nouns on these pages while the labels correspond to the monthly traffic of the page. Actor co-occurrence network is an induced subgraph of the film-director-actor-writer network. Each node denotes an actor with features representing keywords in Wikipedia. Each edge denotes the collaborations, and node labels are the types of actors. Nodes in WebKB represent web pages and edges are hyperlinks between them. Node features are the bag-of-words vectors and the labels are identities. The statistics of these datasets are shown in \cref{tab:Statistics of node classification datasets}

\begin{table}[htbp] 
    \caption{Statistics of the node classification datasets} 
    \centering
    \resizebox{\linewidth}{!}{
    \begin{tabular}{l|ccccc}
         \toprule
         \textbf{Dataset} & \textbf{\# Nodes} & \textbf{\# Edges} & \textbf{\# Features} & \textbf{\# Classes} & \textbf{Homo.}  \\
         \midrule
         Cora & 2,708 & 10,556 & 1,433 & 7 & 0.825 \\
         Citeseer & 3,327 & 9,104 & 3,703 & 6 & 0.717 \\
         Pubmed & 19,717 & 88,648 & 500 & 3 & 0.792 \\
         \midrule
         Chameleon & 2,277 & 36,051 & 2,325 & 5 & 0.249 \\
         Squirrel & 5,201 & 216,933 & 2,089 & 5 & 0.219 \\
         Actor & 7,600 & 29,926 & 932 & 5 & 0.206 \\
         Cornell & 183 & 295 & 1,703 & 5 & 0.106 \\
         Texas & 183 & 309 & 1,703 & 5 & 0.103 \\
         Wisconsin & 251 & 499 & 1,703 & 5 & 0.134 \\
         \bottomrule
    \end{tabular}\label{tab:Statistics of node classification datasets}}
\end{table}

\subsection{Experimental Setup}
For all datasets, we apply the Adam optimizer, and the hyper-parameters that we tune include the learning rate, the dropout rate, the number of subgraph experts, the number and size of hidden graphs in each subgraph expert, the step length of the random walk kernel, and the hidden dimension. For GK-based GNNs, we extract node representations from the layer right before the readout operation and attach a two-layer MLP for the node classification task. 
The model is implemented with the deep learning library PyTorch 2.2.0 and PyG 2.4.0, running on a Linux server with the Intel(R) Xeon(R) Silver 4210R CPU @ 2.40GHz, the NVIDIA RTX 3090 GPU (24 GB), and 128 GB memory.

\subsection{Hyperparameter Sensitivity Analysis}
To better understand the robustness and stability of \ourmethod, we conduct sensitivity analyses with respect to two critical hyperparameters: the number of hidden graphs per expert, and the number of steps used in the Random Walk Kernel (RWK). All experiments in this section are trained for 50 epochs to control training time, and only the hyperparameter on the x-axis is varied, with all other model settings fixed.

\begin{figure}[htbp]
    \centering
    \includegraphics[width=1\linewidth]{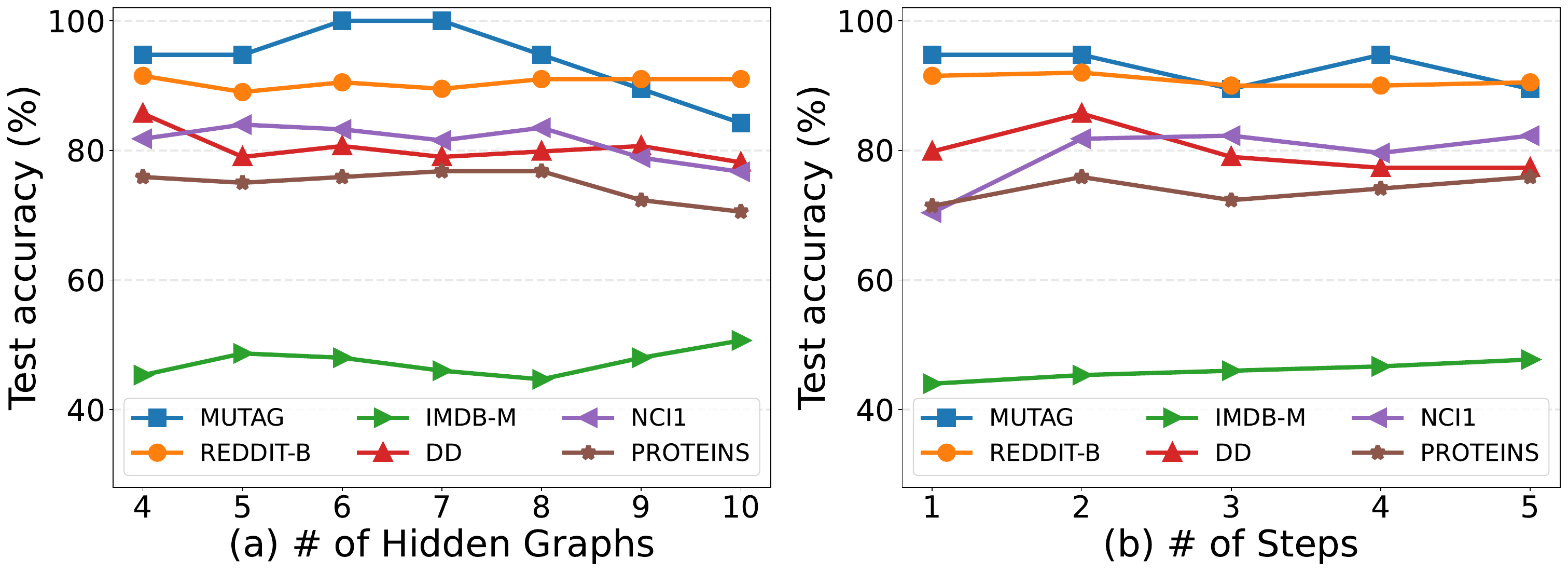}
    \caption{Sensitivity analysis of \ourmethod with respect to (a) the number of hidden graphs per expert and (b) the number of random walk steps.}
    \label{fig:Sensitivity Analysis}
\end{figure}

As shown in \cref{fig:Sensitivity Analysis}(a), increasing the number of hidden graphs generally improves model performance across most datasets, especially in the early range. This is because more hidden graphs allow the model to better capture diverse structural patterns within each expert. However, we observe diminishing returns or even performance drops beyond a certain threshold, as on PROTEINS and NCI1, likely due to overfitting or increased optimization difficulty. These results highlight the importance of balancing representation capacity and generalization.
We also investigate how the number of steps used in random walks affects the model performance in \cref{fig:Sensitivity Analysis}(b). Increasing the number of steps enriches the expressiveness of extracted walk patterns. For most datasets, performance improves up to 3 or 4 steps and then saturates or slightly drops. This suggests that excessively long walks may introduce noise or redundant information, especially for smaller graphs like MUTAG or DD. Thus, a moderate step size provides a good trade-off between expressiveness and efficiency.

\subsection{Additional Visualizations}
To further demonstrate the behavior of our model, we provide additional visualizations of the learned subgraph experts on four datasets: Chameleon, IMDB-MULTI, MUTAG, and Wisconsin. For Chameleon and IMDB-MULTI in \cref{fig:visualization 1}, we visualize the selected subgraph experts as determined by the MoE routing mechanism. These examples highlight how different experts are activated for different subgraph structures, capturing diverse local patterns relevant to node or graph-level tasks.

\begin{figure}[htbp]
    \centering
    \includegraphics[width=1\linewidth]{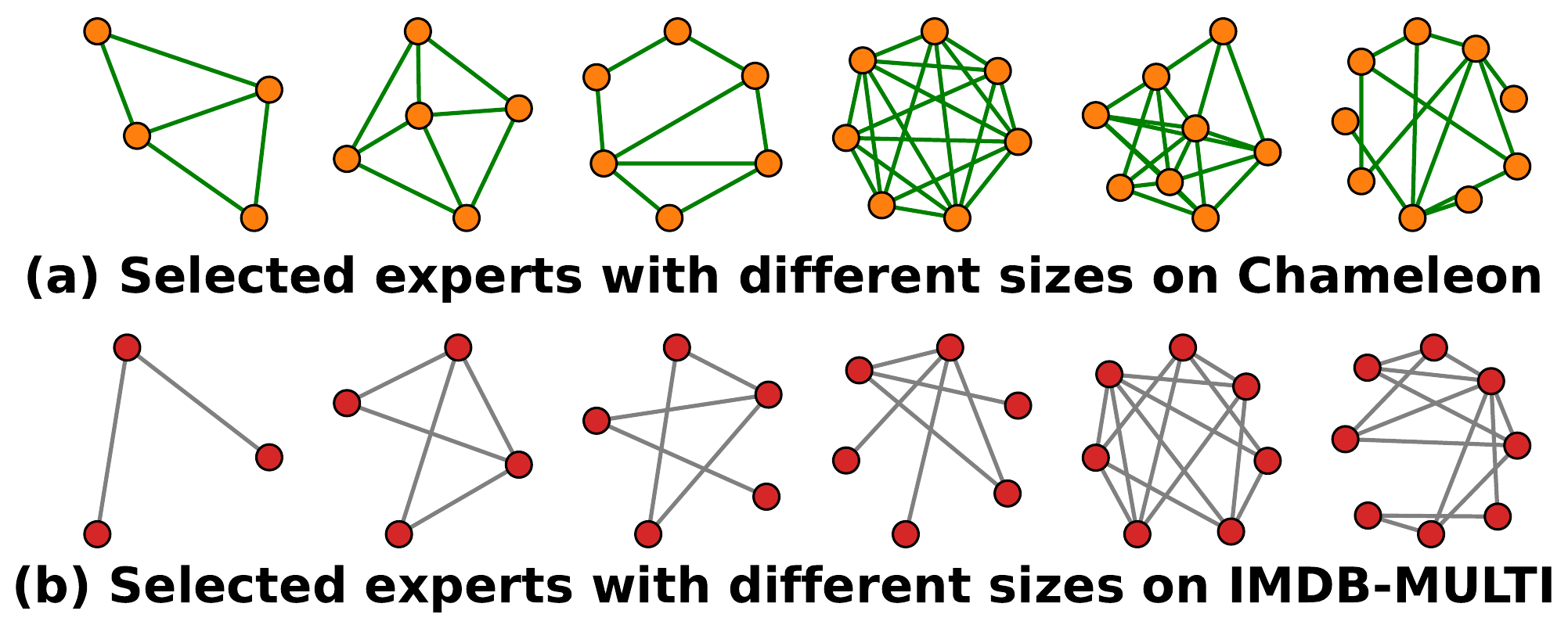}
    \caption{Selected subgraph experts visualization on Chameleon and IMDB-MULTI.}
    \label{fig:visualization 1}
\end{figure}

In contrast, for Wisconsin and MUTAG, we visualize all subgraph experts without filtering by the MoE selection in \cref{fig: MUTAG visualization} and \cref{fig: wisconsin visualization}. In these visualizations, each row corresponds to a different expert, and each column represents a hidden graph in that expert. This exhaustive presentation reveals the redundancy among hidden graphs: several experts produce similar or uninformative patterns that are eventually filtered out by the gating mechanism.
Interestingly, we observe that on simpler datasets such as MUTAG, there is a higher degree of redundancy across experts, indicating that fewer unique subgraph patterns are needed to solve the task. Conversely, Wisconsin, which contains more nuanced structural information, exhibits less redundancy and benefits from a richer set of subgraph experts to capture its underlying characteristics.

\begin{figure*}[htbp]
    \centering
    \includegraphics[width=1\linewidth]{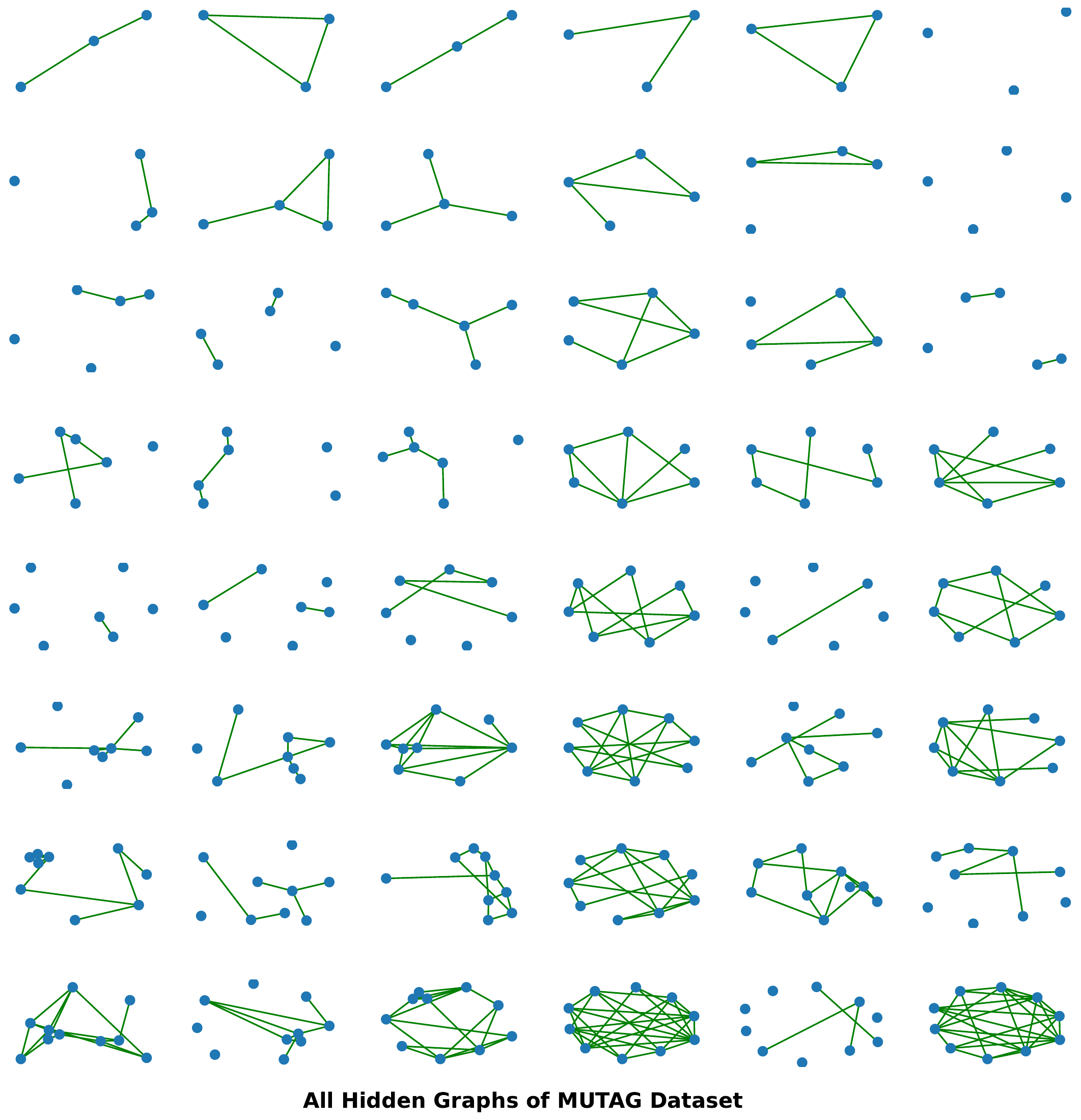}
    \caption{All hidden graphs of MUTAG, regardless of MoE selection. Each row corresponds to a different expert, and each column represents a hidden graph in that expert.}
    \label{fig: MUTAG visualization}
\end{figure*}

\begin{figure*}[htbp]
    \centering
    \includegraphics[width=1\linewidth]{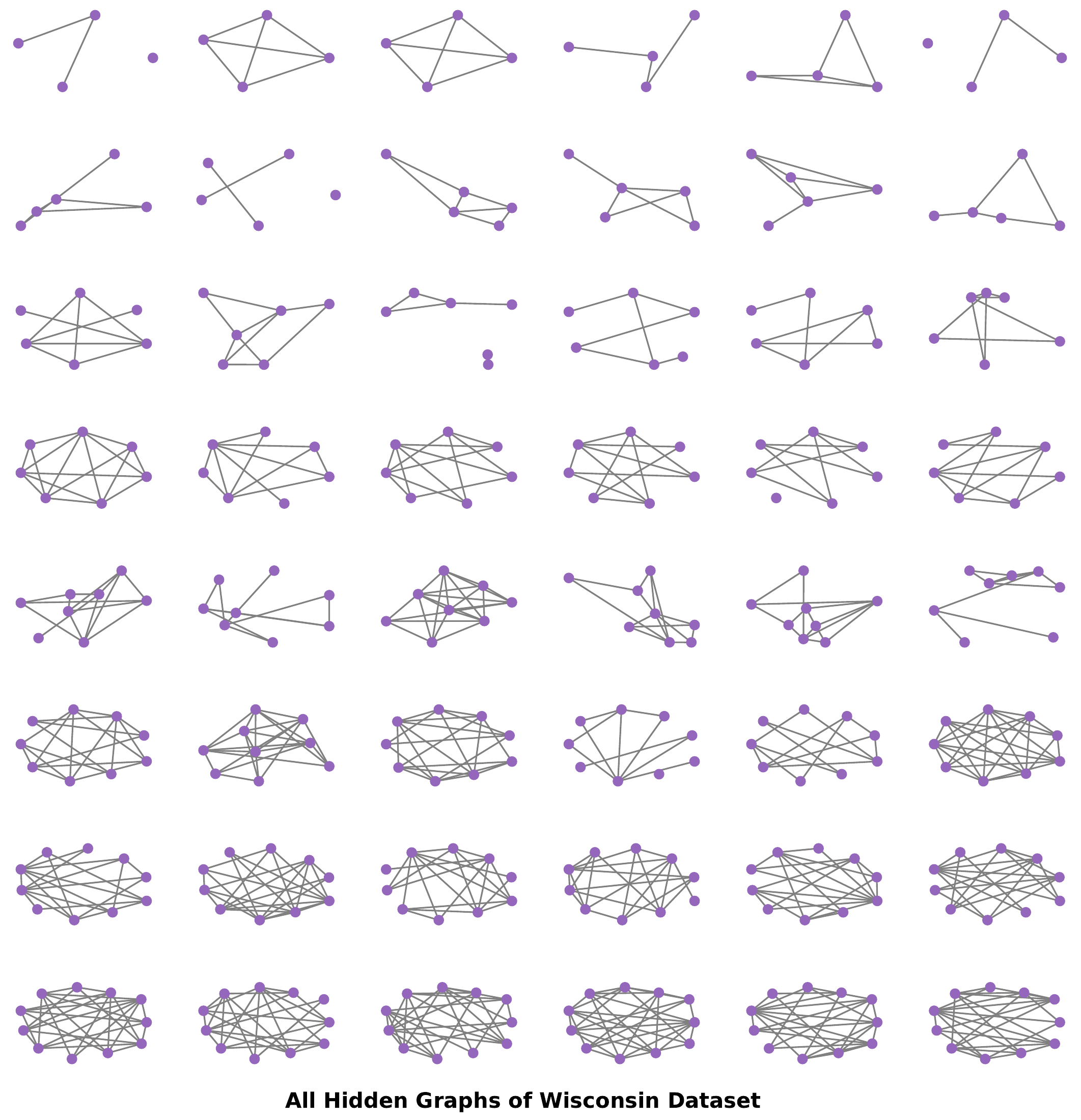}
    \caption{All hidden graphs of Wisconsin, regardless of MoE selection. Each row corresponds to a different expert, and each column represents a hidden graph in that expert.}
    \label{fig: wisconsin visualization}
\end{figure*}

\clearpage
\end{document}